\newcommand{\framework}{\textsc{LivePoint}}
\newtheorem{theorem}{Theorem}
\newtheorem{definition}{Definition}
\title{\LARGE \bf

\framework{}: Fully Decentralized, Safe, Deadlock-Free Multi-Robot Control in Cluttered Environments with High-Dimensional Inputs

%Deadlock-Free Multi-Agent Navigation via Vision-Based Control Barrier Functions

%Deadlock and Collision-Free Multi-Agent Navigation via Enhanced Vision-Based Barrier Functions

%Deadlock and Collision-Free Multi-Agent Navigation via Vision-Based Control Barrier Functions with Integrated Deadlock Avoidance
}
\author{Jeffrey Chen, Rohan Chandra \\
\href{https://livepoint-uva.github.io/}{\small Code, Videos, Proofs at \textbf{livepoint-uva.github.io}} \\
\thanks{Authors are with the Department of Computer Science, University of Virginia. {\tt\small \{fyy2ws, aar8xx\}@virginia.edu}} 
}
\begin{document}

\maketitle

\thispagestyle{empty}
\pagestyle{empty}

%%%%%%%%%%%%%%%%%%%%%%%%%%%%%%%%%%%%%%%%%%%%%%%%%%%%%%%%%%%%%%%%%%%%%%%%%%%%%%%%

% \newcommand{\frameworkfull}{LiveCloudNav}

\begin{abstract}

Fully decentralized, safe, and deadlock-free multi-robot navigation in dynamic, cluttered environments is a critical challenge in robotics. Current methods require exact state measurements in order to enforce safety and liveness \textit{e.g.} via control barrier functions (CBFs), which is challenging to achieve directly from onboard sensors like lidars and cameras. This work introduces \framework, a decentralized control framework that synthesizes universal CBFs over point clouds to enable safe, deadlock-free real-time multi-robot navigation in dynamic, cluttered environments. Further, \framework{} ensures minimally invasive deadlock avoidance behavior by dynamically adjusting agents' speeds based on a novel symmetric interaction metric. We validate our approach in simulation experiments across highly constrained multi-robot scenarios like doorways and intersections. Results demonstrate that \framework{} achieves zero collisions or deadlocks and a $100\%$ success rate in challenging settings compared to optimization-based baselines such as MPC and ORCA and neural methods such as MPNet, which fail in such environments. Despite prioritizing safety and liveness, \framework{} is $35\%$ smoother than baselines in the doorway environment, and maintains agility in constrained environments while still being safe and deadlock-free.

% experiences only minimal speed reductions compared to the fastest methods. This seamless integration of safety, liveness, and smooth robot movement highlights \framework{}'s effectiveness in real-world multi-robot navigation in cluttered environments.

% \framework{} maintains agility comparable to baselines, while achieving smooth navigation achieves smooth, natural navigation 

% \framework{} maintains smoothness and agility comparable to baselines while ensuring safety and liveness, making it a more reliable choice for real-world multi-robot navigation in constrained spaces.
% Additionally, MADCL improves navigation smoothness by <insert percentage>\% while maintaining formal safety guarantees and eliminating deadlocks and collisions.

\end{abstract}

%%%%%%%%%%%%%%%%%%%%%%%%%%%%%%%%%%%%%%%%%%%%%%%%%%%%%%%%%%%%%%%%%%%%%%%%%%%%%%%%

\section{INTRODUCTION}

% \subsection{Motivation}

The dream of having robots work with us in our kitchens, construction sites, and hospitals has driven interest in multi-robot navigation among autonomous vehicles~\cite{chandra2022towards, parikh2024transfer, chandra2020cmetric, chandra2020graphrqi, chandra2021using}, warehouse robots~\cite{chandra2023socialmapf}, and personal home robots~\cite{sharma2023review}. Often, these applications feature small, cluttered environments (such as doorways, hallways, or rooms filled with obstacles). Humans naturally and gracefully navigate these environments every day, such as by weaving through a crowd or slowing down \textit{by just enough} to let another reach the doorway first~\cite{raj2024rethinking, chandra2024towards, francis2023principles}. For robots to seamlessly navigate through these cluttered environments in a similar manner, they must learn to navigate like humans--safely, gracefully, and without getting stuck (deadlock-free). 

Conventional wisdom~\cite{margolis2021,chandra2024_socialgames,gouru2024,zinage2024} tells us that in order for robots to achieve human-like mobility in cluttered environments, their low-level controllers need exact and accurate state measurements of their surroundings which is difficult in practice to realize, especially if the environment is dynamic. Most roboticists, however, would ideally prefer navigation systems that produce human-like trajectories directly using input from onboard sensors such as lidars and cameras, without relying on expensive mapping and perception for exact state measurements~\cite{desa2024_pointcloud}. For instance, Sa et al.~\cite{desa2024_pointcloud} perform point cloud-based single robot navigation to handle dynamic environments, allowing robots to react to rapidly changing obstacles. 

% A key requirement of effective multi-robot navigation is accurate environmental perception. Traditional navigation methods often depend on structured maps or predefined obstacle locations, which can be restrictive in unstructured or dynamic environments. Consequently, use of high-dimensional inputs, such as point clouds, have emerged as a powerful tool for real-time navigation. Unlike fixed maps approaches for structured environments, point cloud-based navigation allows for real-time adaptability by directly leveraging sensor data. Point cloud-based navigation enable handling of dynamic environments by allowing robots to react to rapidly changing obstacles. Furthermore, this approach improves scalability, as point clouds naturally capture the relative positions of surrounding robots. 

% Given the high density of point cloud data, where an immense number of points must be processed, directly applying such analytical methods across the entire dataset can lead to significant computational bottlenecks.
\begin{figure}[t]
      \centering
      %\framebox{\parbox{3in}{}}
      \includegraphics[scale=0.47]{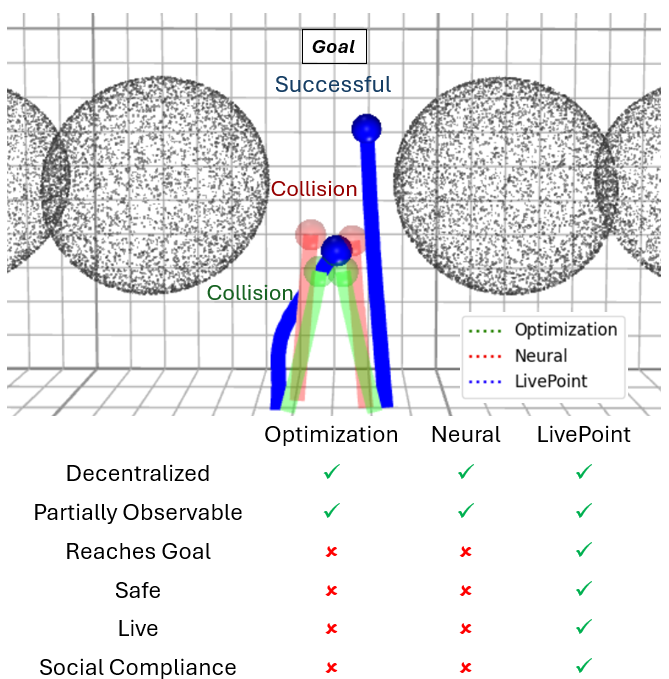}       \caption{Our decentralized approach enables effective, safe, live, and socially compliant robot navigation in cluttered environments using only high-dimensional point cloud input.} 
      \label{cover_figure}
      \vspace{-15pt}
   \end{figure}

% \rohan{1. update the name, 2. use i and j for robots, 3. make the black walls equal, 4. I think we need a more ``kick-ass/wow-factor'' image with better graphics. Like use the original simulation graphics and then you had tacked on a bunch of other things? like i want to see the point clouds, I want to see temporal fading of the trajectories etc. RIght now, this is looking very.. spartan. See figure I sent on slack. I like the idea of the 3d sim visual from top view}

There are several challenges to safe and deadlock-free multi-robot navigation in cluttered environments with high-dimensional inputs such as point clouds. The first key challenge is that ensuring both safety and liveness using dense point clouds can be complex \cite{desa2024_pointcloud, guan2022m3detr} and thus, computationally expensive. For instance, some analytical methods use control barrier functions (CBFs)~\cite{desa2024_pointcloud,tong2023_nerf} to guarantee safety, which requires intensive computations to evaluate the barrier function and its derivatives. Additionally, in decentralized systems, we have no central authority that can coordinate agents in a manner that deadlocks will be prevented or resolved, particularly critical when agents are self-interested (each optimizes its own individual objective function) and have conflicting objectives~\cite{chandra2023socialmapf, chandra2022game, chandra2022game, suriyarachchi2022gameopt, suriyarachchi2024gameopt+}. Lastly, learning-based methods~\cite{pointcloudbasedreinforcement,cosner2022_stereovision,dawson2022learningsafegeneralizableperceptionbased,xiao2022differentiablecontrolbarrierfunctions, mavrogiannis2022b} can struggle with generalization and lack formal safety guarantees. 

Moreover, cluttered environments pose additional challenges in computing safe paths for multiple robots, particularly due to geometric symmetry in the environment~\cite{arul2023dsmpepcsafedeadlockavoidingrobot}. Symmetry arises when multiple robots find themselves in nearly identical situations--having similar distances to their goals, nearly equal velocities, and overlapping optimal trajectories\cite{grover2020_symmetry}. Unlike humans, who naturally resolve such conflicts through social cues and subtle motion adjustments, robots lack an inherent mechanism to break symmetry, making coordination difficult and leading to deadlocks\cite{grover2016_deadlock}. Geometric and model-based methods \cite{zhou2017,Fiorini1998MotionPI,vanderberg2008} have been explored to address navigation in such environments, but conflicting navigation objectives can still result in deadlock~\cite{arul2023dsmpepcsafedeadlockavoidingrobot}. Reactive collision avoidance techniques, such as Optimal Reciprocal Collision Avoidance (ORCA)~\cite{orca2011} attempt to resolve these issues by computing admissible velocity sets. However, in highly constrained environments, these methods can fail, as robots overly adjust their velocities to avoid collisions but ultimately stall~\cite{dergachev2021distributedmultiagentnavigationbased}. Many existing approaches remain too conservative to effectively navigate cluttered environments without deadlock \cite{arul2023dsmpepcsafedeadlockavoidingrobot}. 

% Analytical methods \cite{tong2023_nerf} use control barrier function (CBFs) to enforce safety constraints. However, these approaches only handle single-robot navigation.

% However, real-world applications often involve multiple robots operating in shared spaces, where the numerous complexities of multi-agent navigation become prominent. In such scenarios, additional challenges arise, including deadlocks and inter-agent collisions, particularly in constrained spaces like doorways, hallways, or intersections. These challenges are amplified by the need for dynamic obstacle handling, where each agent must perceive and respond to other agents as potential obstacles in real time.

\subsection{Main Contributions}

To address these challenges, we propose a new multi-robot navigation approach that takes as input point clouds and produces in real-time safe and deadlock-free trajectories for fully decentralized robots in cluttered environments. Key to our approach is a novel \textit{universal CBF} formulation that dynamically and simultaneously computes safety and liveness certificates for the controller. The safety component of the universal CBF follows the standard CBF definition commonly employed in the literature~\cite{ames2016_cbf}. The liveness component is driven by an interaction function that intelligently breaks deadlock-causing symmetry between agents in a minimally invasive fashion, and is cast as a CBF. In summary, our contributions are as follows:

\begin{enumerate}
    \item  \textbf{A fully decentralized multi-robot navigation algorithm with point cloud input:} We introduce a sequential control strategy that enables each agent to dynamically adjust its safety constraints based on real-time observations of other agents. Each robot processes point cloud data to construct CBFs while considering the motion of dynamic obstacles (i.e., other agents) in a real time, iterative framework, where each control cycle takes $0.04$ seconds. 
    % Rather than executing control decisions simultaneously, robots update their control inputs in a near-concurrent manner. One robot moves for 0.02 seconds, after which the second robot updates its control and moves for the next 0.02 seconds, meaning the total time to compute and execute a single control step for both robots is 0.04 seconds. 
    % This staggered approach ensures that each robot's movement accounts for the most recent state of the environment, leading to safer and liver multi-robot navigation.
    
    \item \textbf{Synthesizing a universal CBF for safety and liveness:} We introduce a deadlock prevention mechanism that quantifies symmetry between interacting agents, identifying cases where robots are likely to deadlock, particularly in highly constrained spaces like doorways and intersections. When symmetry is detected, we apply minimal velocity perturbations to proactively break deadlock-prone configurations while maintaining safety guarantees. By integrating the deadlock detection mechanism within the same framework as the CBF constraints, \framework{} unifies CBF-based safety and deadlock avoidance into a single cohesive approach, eliminating the need for separate heuristics or rule-based strategies. Our universal CBF ensures that each robot not only avoids collisions but also maintains continuous progress, even in highly constrained environments.

\end{enumerate}

We evaluate our proposed multi-agent Depth-CBF framework in simulation scenarios, including highly constrained environments such as narrow doorways and intersections. Results show that \framework{} reduces collisions and deadlocks to
zero and achieves 100\% successful navigation in challenging
settings where previous methods fail.

% We evaluate our proposed multi-agent Depth-CBF framework in simulation scenarios, including highly constrained environments such as narrow doorways and intersections. \rohan{copy the same line in abstract with the numbers and percentages, that is, be precise with the results} 
% The results demonstrate significant improvements in safety, liveness, and navigation efficiency compared to baseline methods. Specifically, our approach reduces the frequency of collisions and deadlocks while achieving smoother trajectories compared to both single-agent Depth-CBF and other multi-agent approaches.

\subsection{Organization of Paper}
Section \ref{Related Work} presents relevant prior work that has been done in the areas relevant to our work.  Section \ref{Problem Formulation} introduces the problem formulation, including the notation used throughout the paper, general notions of multi-agent navigation, and the concept of Social Mini Games. Section \ref{Background} presents important background on CBFs and Depth-CBF. Section \ref{Approach} outlines the design of our proposed approach, detailing the expansion to multi-robot navigation and the measures implemented to prevent potential deadlocks. Section \ref{Experiments} provides a comprehensive analysis of our simulation results, including comparisons with relevant baselines. We conclude with Section \ref{Conclusion}, providing a summary of key findings and potential directions for future research.

\section{Related Work}
\label{Related Work}
\subsection{Vision-Based Navigation}

Traditional vision-based navigation often relies on simultaneous localization and mapping (SLAM)-based approaches, such as occupancy grid methods \cite{elfes1989_occupancy}, to construct global maps for control and planning. While effective in structured environments, these methods are computationally intensive and struggle to adapt in dynamic settings. More recent methods address some of the limitations with visual-based approaches that combine planning and learning~\cite{chaplot2020learningexploreusingactive,faust2018prmrllongrangeroboticnavigation,chiang2019learningnavigationbehaviorsendtoend,meng2020scalinglocalcontrollargescale}. These methods utilize learning to solve short-horizon tasks and planning over non-metric maps to reason over longer horizons~\cite{Shah_2021}. While such approaches eliminate the need to do detailed map building, they make multiple imperfect assumptions, such as guaranteed access to a perfect replica of the environment~\cite{meng2020scalinglocalcontrollargescale} or requiring online data collection~\cite{chaplot2020learningexploreusingactive}. Control Barrier Functions (CBFs) have also been explored as an alternative to enforce safety constraints without full mapping~\cite{ames2019_cbf}, with extensions incorporating learning-based methods~\cite{cosner2022_stereovision} and and Neural Radiance Fields (NeRF)-based representations~\cite{tong2023_nerf}. However, these approaches remain limited by by high computational costs and are generally limited to static environments. Depth-CBF \cite{desa2024_pointcloud} introduced direct point-cloud based safe navigation, offering better computational efficiency and adaptability, but does not account for multi-robot navigation or coordination. 

\subsection{Safety and Liveness in Cluttered Environments}
 
Deadlocks commonly arise in multi-agent navigation due to symmetry in the environment or trajectory conflicts \cite{grover2016_deadlock, grover2020_symmetry, chandra2024_socialgames, zinage2024, gouru2024livenet}. Stochastic perturbations offer a straightforward resolution but often suboptimal trajectories  \cite{wang2017_safetybarrier}. Structured approaches include rule-based strategies such as the right-hand rule \cite{chen2022_mpc} or clockwise rotation \cite{grover2016_deadlock} improve performance, but their reliance on predetermined ordering makes them less generalizable. Other scheduling and priority-based frameworks for deadlock resolution are inspired by intersection management literature \cite{zhong2020_intersection}, such as first-come, first-served (FCFS) \cite{au2015_intersection}. Such approaches can lead to inefficiencies and congestion, especially when multiple agents arrive simultaneously. Auction-based systems \cite{carlino2013_auction} offer more flexibility by dynamically assigning priorities based on bidding but introduce communication overhead. The concept of deadlock prevention, rather than resolution, aims to predict and mitigate potential deadlocks before they even occur. Utilizing long-horizon planning allows for potential preemptive perturbation of the robots in order to avoid subsequent deadlocks \cite{chandra2024_socialgames}.

To bridge the gaps in point-cloud based navigation and deadlock prevention, we introduce \framework{}, which efficiently uses point cloud data for safe multi-robot navigation, while incorporating liveness-driven robot velocity perturbation mechanisms for deadlock-free multi-agent navigation. This approach retains the computational efficiency and adaptability while addressing the unique coordination challenges that arise in multi-agent settings.

\section{PROBLEM FORMULATION}
\label{Problem Formulation}

We address the problem of safe and deadlock-free navigation for multiple robots in a cluttered environment, using point cloud data. In this section, we define the problem as well as mathematically formalize cluttered environments as social mini-games, and state our overall objective. 
% To augment this approach, we add mechanisms for collision and deadlock avoidance. We use the concept of Social Mini Games to show the effectiveness of our approach. 
% The notation used throughout this paper is summarized in Table \ref{table:notation}.

% \begin{table}[t]
%     \centering
%     \renewcommand{\arraystretch}{1.2}
%     \setlength{\tabcolsep}{4pt}
%     \begin{tabular}{c p{5.6cm}}
%         \hline
%         \textbf{Symbol} & \textbf{Definition} \\
%         \hline
%         $k$ & Number of robots \\
%         $T$ & Total time steps of the simulation \\
%         $X \subset \mathbb{R}^n$ & State space of the system \\
%         $x_t^i \in X$ & State of robot $i$ at time $t$ \\
%         $q_t^i \in X$ & Position of robot $i$ at time $t$ \\
%         $v_t^i \in X$ & Velocity of robot $i$ at time $t$ \\
%         $u_t^i \in U^i$ & Control input for robot $i$ at time $t$, where $U^i$ is the admissible input set \\
%         $P_t^i$ & Point cloud perceived by robot $i$ at time $t$ \\
%         $h(x_t^i, P_t^i)$ & Depth-CBF for robot $i$ at time $t$ \\
%         $\ell(x_t^i, x_t^j)$ & Liveness function at time $t$ \\
%         \hline
%     \end{tabular}
%     \caption{Summary of Notation}
%         \label{table:notation}
% \end{table}

% \rohan{lets use $\mathcal{J}^i$ everywhere}
% \subsection{Multi-Robot Navigation}

We first formulate a \textit{general} multi-agent navigation scenario using the following partially observable stochastic game (POSG)~\cite{hansen2004_dynamicprogramming}: $\left \langle k, X, U^i, \mathcal{T}, \mathcal{J}^i, O^i, \Omega^i \right \rangle$ . A superscript of $i$ refers to the $i^\textrm{th}$ agent, where $i \in [1, .., k]$ and a subscript of $t$ refers to discrete time step $t$ where $t \in [1,..,T]$. At any given time step $t$, agent $i$ has state $x^i_t \in X$. The dynamics of the agents are defined by the transition function $\mathcal{T} : X \times U^i \to X$ at time $t \in [1, .., T]$. The cost function, $J^i : X \times U^i \to \mathbb{R}$ is used to evaluate a specified control action for the agent’s current state. The terminal cost $\mathcal{J}^i : X \to \mathbb{R}$ is used to calculate the cost of the terminal state $x^i_T$. Each agent $i$ also has an observation $o^i_t \in \Omega^i$ which is determined via the observation function $o^i_t = O^i(x^i_t,P_t^i)$. This observation function takes in the state of the robot, as well as its perceived point cloud $P_t^i$, as input, and outputs the observations that the robot makes. A discrete trajectory of agent $i$ is defined by $\Gamma^i = \left(x^i_0, x^i_1, ..., x^i_T\right)$, and has a corresponding control input sequence $\Psi^i = \left(u^i_0, ..., u^i_{T -1}\right)$. Agents follow discrete and deterministic control-affine dynamics given by $x^i_{t+1} = f(x^i_t) + g(x^i_t)u^i_t$, where $f, g$ are locally Lipschitz continuous functions. At any time $t$, each agent $i$ occupies a space given by $C^i(x^i_t) \subseteq X$. Robots $i$ and $j$ are considered to have collided at time $t$ if $C^i(x^i_t) \cap C^j(x^j_t) \neq \emptyset$.

\subsection{Social Mini Games}
   
% Social Mini Games represent the most challenging and complex scenarios that can arise during multi-agent navigation. By making sure our approach effectively navigates such SMGs, we demonstrate the effectiveness of our proposed approach. 

A Social Mini-Game (SMG) is a particular type of a POSG and formally characterizes a cluttered environment. It is defined as follows:

\begin{definition}
    A social mini-game occurs if for some $\delta > 0$ and integers $a, b \in (0, T)$ with $b - a > \delta$, there exists at least one pair $i, j$, $i \neq j$ such that for all $\Gamma^i \in \tilde\Gamma^i$, $\Gamma^j \in \tilde\Gamma^j$, we have 
{\small\[
C^i(x^i_t) \cap C^j(x^j_t) \neq \emptyset \quad \forall t \in [a, b],
\]}
where $x^i_t, x^j_t$ are elements of $\Gamma^i$ and $\Gamma^j$, and $\tilde\Gamma^i$ is the trajectory robot $i$ would take in the absence of any other robots.
\end{definition}
In other words, a SMG occurs when at least two agents have conflicting preferred trajectories over a sustained time interval. That is, if each agent were to follow its natural, unimpeded path ($\tilde\Gamma^i$), their occupied regions $C^i(x^i_t)$ would overlap for at least $b-a$ seconds. This overlap means that the agents must coordinate their movements, lest they end up colliding. Such a definition captures how symmetry in the environment contributes to SMGs. When two or more agents are in a nearly symmetrical scenario, having similar distances to their respective goals and nearly identical velocities. they may arrive at areas like doorways or intersections simultaneously. Figure~\ref{smg} shows an example of an SMG. 
\begin{figure}[t]
      \centering
      %\framebox{\parbox{3in}{}}
      \includegraphics[scale=0.3]{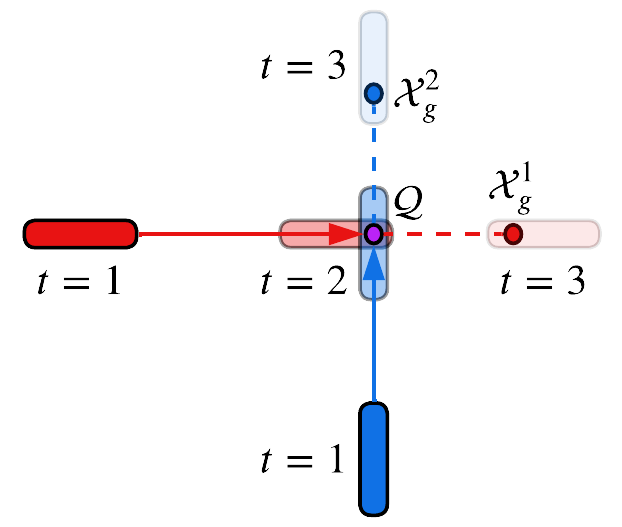}
      \caption{Example of a Social Mini-Game. The two robots (red and blue) have straight-line preferred trajectories  that intersect at $t=2$, as both must pass through $\mathcal{Q}$ on the way to their goal states, $X^1_g$ and $X^2_g)$. Without predefined rules or corrective actions, the two robots will collide.} 
      \label{smg}
      \vspace{-10pt}

   \end{figure}

 % \rohan{we should move ojective to after SMG. That way, you can mention that J tries to keep the resulting traj/controls to be as close to preferred traj as possible, while maintaining smoothness of the control}

The goal of our approach is as follows:
 
\textbf{Objective}:The optimal trajectory ($\Gamma^{i,*}$) and corresponding optimal sequence of control inputs ($\Psi^{i,*}$) for the $i^\textrm{th}$ robot are defined as those that minimize the following cost function:

{\small\begin{subequations} \label{eq:optimal_control}
    \begin{align}
        (\Gamma^{i,*}, \Psi^{i,*}) &= \arg\min_{\Gamma^i, \Psi^i} \sum_{t=0}^{T-1} \mathcal{J}^i(x^i_t, u^i_t) + \mathcal{J}^i_T(x^i_T) \label{eq:cost_function} \\
        \text{subject to:} \quad & \nonumber \\
        & C^i(x^i_t) \cap C^j(x^j_t) = \emptyset, \quad \forall i \neq j, \quad \forall t \in [0, T] \quad &&  \label{eq:collision_avoidance} \\
        &\| u_t^i \| > 0, \quad \forall i \in \{1, ..., k\}, \quad \forall t \in [0, T-1] \quad && \label{eq:deadlock_prevention} \\
        & x^i_{t+1} = f(x^i_t) + g(x^i_t)u^i_t, \quad \forall t \in [0, T-1] \quad &&  \label{eq:system_dynamics} \\
        & x^i_T \in x^i_g, \quad \forall i \in \{1, ..., k\} \quad && \label{eq:goal_satisfaction}
    \end{align}
\end{subequations}}
    
\noindent where $x^i_g$ is the goal state region for the $i^\textrm{th}$ robot. This optimization framework ensures that each robot computes a trajectory that adheres to safety constraints while avoiding deadlocks and reaching its goal, while minimizing deviation from its preferred trajectory and maintaining smooth control.

\section{BACKGROUND}
\label{Background}

We introduce key theoretical foundations necessary for understanding our framework. This section begins with an overview of Control Barrier Functions (CBFs), which serve as the backbone that enforces and ensures safety constraints in robotic navigation. Next, we review Depth-CBF \cite{desa2024_pointcloud}, an approach that leverages point cloud data for real-time collision avoidance in single agent environments. Establishing the background here provides necessary groundwork for our multi-agent point-cloud navigation approach and its integration with deadlock prevention strategies in later sections. 

\subsection{Control Barrier Functions}

Control Barrier Functions (CBFs) \cite{ames2016_cbf} are a mathematical framework used in control theory to ensure system safety while achieving desired control objectives.
% They provide a systematic way to design controllers that keep the system within a specified safe set, ensuring that constraints such as avoiding obstacles or maintaining certain operational limits are never violated.
A CBF is a scalar function, $h({x})$, defined over the state space of a system, where $\mathbf{x}$ represents the state of the system. The safe set, $\mathcal{C}$, is defined as the set of all states for which $h(\mathbf{x}) \geq 0$. Ensuring safety involves keeping the system state within $\mathcal{C}$ at all times. That is, safety is guaranteed as long as the CBF is always non-negative. This is achieved by designing a control input, $\mathbf{u}$, such that the following condition is satisfied:
{\small\begin{equation}
\dot{h}(x, u) \geq -\alpha(h(x)),
\label{eq:cbfcondition}
\end{equation}}
where $\dot{h}(x, u)$ is the derivative of $h(x)$, and $\alpha$ is an extended class $\mathcal{K}$ function, typically a linear or higher-order function that ensures the safety constraint is enforced with appropriate robustness. In practice, CBFs are integrated into a control optimization framework. Control-affine systems are utilized to facilitate usage of CBFs. In the continuous-time setting, a control-affine system is given by:

{\small\begin{equation}
\dot{x} = f(x) + g(x)u,
\end{equation}}
where $f(x)$ represents the system dynamics and $g(x)$ maps control inputs to state derivatives. The CBF condition becomes:
{\small\begin{equation}
\frac{\partial h}{\partial x} (f(x) + g(x)u) + \alpha(h(x)) \geq 0.
\end{equation}}

In our implementation, we utilize the fourth-order Runge-Kutta (RK4) method~\cite{githubGitHubHybridRoboticsdepth_cbf} to numerically integrate the continuous-time system over discrete time steps. Since our implementation operates in discrete time, we reformulate the CBF condition accordingly. The discrete-time counterpart of the CBF condition is given by: 

% The RK4 method updates the system state as follows:

% \begin{align}
%     \mathbf{k}_1 &= \mathbf{f}(\mathbf{x}_t) + \mathbf{g}(\mathbf{x}_t)\mathbf{u}_t, \\
%     \mathbf{k}_2 &= \mathbf{f}(\mathbf{x}_t + \frac{\Delta t}{2} \mathbf{k}_1) + \mathbf{g}(\mathbf{x}_t + \frac{\Delta t}{2} \mathbf{k}_1)\mathbf{u}_t, \\
%     \mathbf{k}_3 &= \mathbf{f}(\mathbf{x}_t + \frac{\Delta t}{2} \mathbf{k}_2) + \mathbf{g}(\mathbf{x}_t + \frac{\Delta t}{2} \mathbf{k}_2)\mathbf{u}_t, \\
%     \mathbf{k}_4 &= \mathbf{f}(\mathbf{x}_t + \Delta t \mathbf{k}_3) + \mathbf{g}(\mathbf{x}_t + \Delta t \mathbf{k}_3)\mathbf{u}_t, \\
%     \mathbf{x}_{t+1} &= \mathbf{x}_t + \frac{\Delta t}{6} (\mathbf{k}_1 + 2\mathbf{k}_2 + 2\mathbf{k}_3 + \mathbf{k}_4).
% \end{align}

% \begin{equation}
% \begin{aligned}
%     k_1 &= f(x_t) + g(x_t)u_t, \\
%     k_2 &= f(x_t + \frac{\Delta t}{2} k_1) + g(x_t + \frac{\Delta t}{2} k_1)u_t, \\
%     k_3 &= f(x_t + \frac{\Delta t}{2} k_2) + g(x_t + \frac{\Delta t}{2} k_2)u_t, \\
%     k_4 &= f(x_t + \Delta t k_3) + g(x_t + \Delta t k_3)u_t, \\
%     x_{t+1} &= x_t + \frac{\Delta t}{6} (k_1 + 2k_2 + 2k_3 + k_4).
% \end{aligned}
% \end{equation}

{\small\begin{equation}
    h(x_{t+1}) - h(x_t) + \alpha(h(x_t)) \Delta t \geq 0.
\end{equation}}

% CBFs have gained widespread adoption in robotics, autonomous vehicles, and other safety-critical applications due to their ability to handle dynamic and uncertain environments. They provide a flexible and robust mechanism for enforcing safety, even in the presence of complex constraints and interactions with external agents or obstacles.

\subsection{Single Robot Navigation with Point Clouds}

The Depth-Based Control Barrier Function (Depth-CBF) framework~\cite{desa2024_pointcloud} ensures safety in real-time vision-based navigation by utilizing point cloud data as a direct representation of the robot's environment. 
% This approach eliminates the need for global maps or prior knowledge, enabling adaptability in unstructured and dynamic environments.
The robot's position is denoted as $q \in \mathbb{R}^2$, and its environment is represented as a point cloud $P^i = \{p_i\}_{i=1}^N$, where each $p_i \in \mathbb{R}^2$ is a point obtained from sensors such as LiDAR or depth cameras. To define a safety margin, the control barrier function $h(q)$ is given by:

{\small\begin{equation}
    h(q) = \min_{p \in P} \{\|q - p\|^2 - \delta^2\}
    \label{eq:cbf_calc}
\end{equation}}

where $\delta > 0$ specifies the minimum safe distance. We utilize a default  $\delta$ value of 0.1. The condition $h(q) \geq 0$ ensures that the robot maintains the safety margin. Note that as per Equation~\ref{eq:cbfcondition}, to ensure safety over time, $h(q)$ must satisfy the following condition:
{\small\[
\dot{h}(q) + \alpha(h(q)) \geq 0.
\]}
% where $\alpha$ is a class-$\mathcal{K}$ function that enforces stabilizing behavior, keeping $q$ within the safe set.
% The effectiveness of this approach depends on efficiently computing the gradient \(\frac{d h}{d t}(q)\). Since $h(q)$ is determined by the closest point $p^* \in P$, a naive search across the entire point cloud would be computationally expensive. Instead, the method identifies $p^*$ using a nearest-neighbor search and locally fits a quadratic surface to the neighboring points in the point cloud. This local approximation allows efficient computation of both $h(q)$ and \(\frac{d h}{d t}(q)\), making it suitable for real-time applications.
The Depth-CBF is incorporated into a Quadratic Program (QP) to compute safe control inputs while remaining close to a nominal control input $k(q)$. The QP is formulated as:
{\small\begin{equation}
    u^* = \arg\min_u \frac{1}{2} \|u - k(q)\|^2
    \label{eq:QP}
\end{equation}
\[
\text{subject to } \nabla h(q)^T (f(q) + g(q)u) +\alpha(h(q))\geq 0,
\]}
where $f(q)$ and $g(q)$ describe the system dynamics in control-affine form:
$\dot{q} = f(q) + g(q)u$. This formulation ensures that the control input $u^*$ minimally deviates from the nominal input $k(q)$ while satisfying the safety constraints imposed by the Depth-CBF.
   \begin{figure*}[tb]
      \centering
      %\framebox{\parbox{3in}{}}
      \includegraphics[scale=0.49]{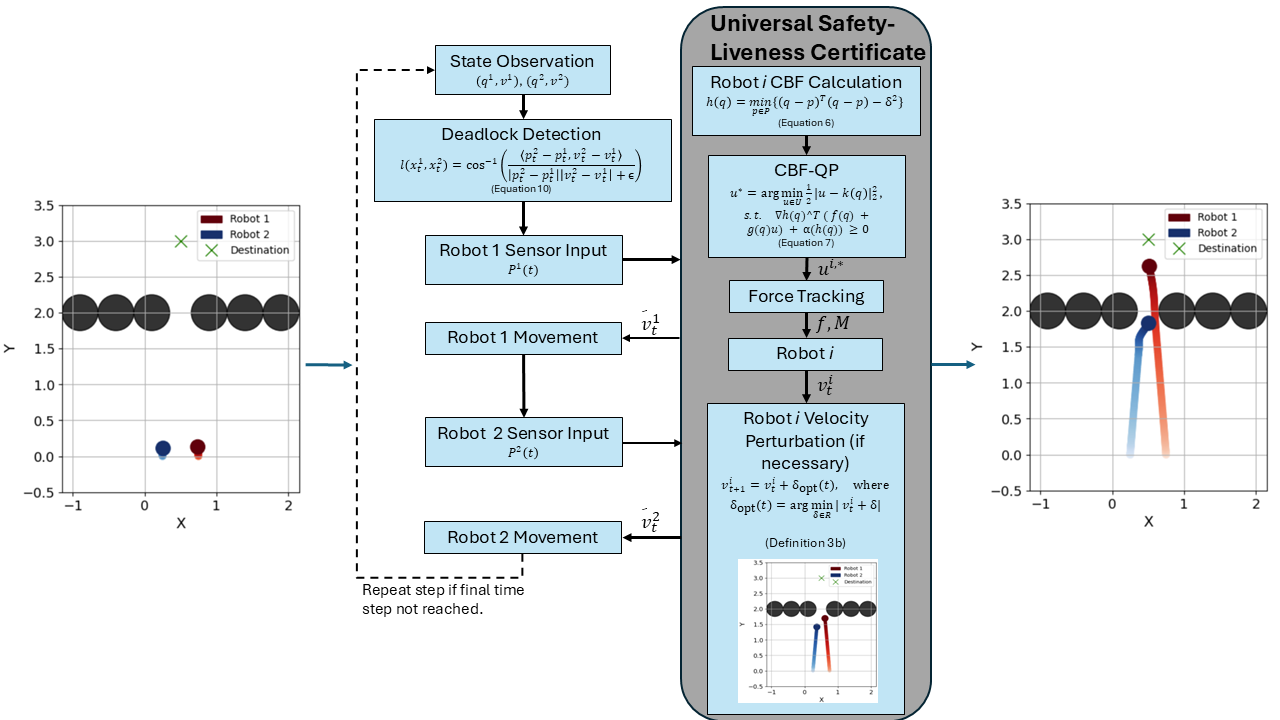}
      \caption{Technical flowchart illustrating our multi-agent navigation approach. Our universal safety-liveness certificate processes point cloud input to generate robot velocities that ensure deadlock-free navigation. Collectively, the blue elements represent one simulation step.}
      \label{technical_flow}
      \vspace{-10pt}

   \end{figure*}

\section{\framework{}: UNIVERSAL SAFETY AND LIVENESS CERTIFICATES}
\label{Approach}
In this section, we introduce \framework{} for safe and deadlock-free multi-robot navigation using real-time point cloud perception in SMGs. To encourage liveness, we present a deadlock prevention strategy, which quantifies trajectory symmetry and applies minimal velocity adjustments to maintain progress toward goals. We outline the computational framework that integrates safety constraints and deadlock prevention into a single, decentralized control policy.

\subsection{Safety} To ensure safe and collision-free navigation, we employ CBFs to define safety constraints that prevent robots from colliding with static and dynamic obstacles. These CBF constraints are derived solely from high-dimensional point cloud data. At each step, each robot perceives its environment using point cloud data $P^i$, which allows it to dynamically compute safety constraints at every time step. With the point cloud input, we use Equation~\ref{eq:cbf_calc} to our Control Barrier Function. Next, we compute a control input utilizing a modified CBF-QP controller. Our controller is formulated using Equation~\ref{eq:QP}, which is valuable as it enables control utilizing only a single optimization constraint. The calculated control input $u^{i,*}$ is provided to a force-tracking controller, which generates a thrust $f \in \mathbb{R}$ and a moment $M \in \mathbb{R}^3$. These control inputs enable the quadrotor to track the computed safe $u^{i,*}$. The force $f$ and moment $M$ are then utilized to determine the quadrotor's movement, as according to its translational and rotational dynamics, as follows:

{\small\begin{equation}
    \dot{q} = v, \quad m\dot{v} = f R e_3 - mg e_3,
    \label{eq:quadrotor_dynamics1}
\end{equation}}

{\small\begin{equation}
    \dot{R} = R\hat{\omega}, \quad I \dot{\omega} + \hat{\omega} I \omega = M,
    \label{eq:quadrotor_dynamics2}
\end{equation}}

where $\hat{\cdot}$ maps a 3D vector to a skew-symmetric matrix in ${so}(3)$ such that $\hat{x} y = x \times y$ for all $x, y \in \mathbb{R}^3$, $m$ is the quadrotor mass, $R$ is the rotational matrix, $f$ is the total rotor thrust, $e_3$ is the Euclidean basis vector $[0,0,1]^\top$, $I$ is the inertia tensor, $\omega$ is the angular velocity, $M$ is the total moment applied, and $g$ is the acceleration due to gravity.

% where $\hat{\cdot}$ is defined as: $\hat{x} y = x \times y, \quad \forall x, y \in \mathbb{R}^3$, $m$ is quadrotor mass, $R$ is the rotational matrix, $f$ is the total thrust generated by the rotors, $e_3$ is the Euclidean basis vector $[0,0,1]^T$, and $I$ is the inertia tensor, $\omega$ is the angular velocity, M is the total moment applied, and g is the acceleration due to gravity. 

% The approach outlined in this section makes up the Safety aspect of our Universal Safety-Liveness Certificate, as seen in Figure~\ref{technical_flow}.

\subsection{Liveness} Our two-part approach to ensuring deadlock-free navigation consists of deadlock detection and resolution. Deadlock resolution is only triggered if a potential deadlock is flagged.

\paragraph{Deadlock Detection} At every time step, before any movement occurs, we check for a potential deadlock using a liveness function, which captures the symmetry of the environment and is defined as:

   {\small\begin{equation} 
        \ell\big(x^i_t, x^j_t\big) = \cos^{-1} \left( \frac{\langle p^j_t - p^i_t, v^j_t - v^i_t \rangle}{| p^j_t - p^i_t | | v^j_t - v^i_t | + \epsilon} \right), 
    \label{eq:liveness_function}
    \end{equation}}

    \noindent where $\langle a, b \rangle$ denotes the dot product between vectors $a$ and $b$, $| \cdot |$ denotes the Euclidean norm, $\ell \in [0, \pi]$, and $\epsilon > 0$ ensures that the denominator is positive. The liveness function gives the degree of symmetry by measuring the angle between the relative displacement and relative velocity of two robots. In cases of near-perfect symmetry, these vectors align, making the angle approach zero. If $\ell_{ij}(x^i, x^j)(t) \leq \ell_{thresh}$, where $\ell_{thresh}$ is a liveness threshold, a deadlock is detected. We utilize $\ell_{thresh} = 0.3$. To explain why this value is used for the liveness threshold, we introduce scaling factor $\zeta$, which represents the ratio between two agents' velocities. The minimum value of $\zeta$ that determines whether one robot can clear the constrained region of a SMG before the other arrives is presented in the following theorem:

    \begin{theorem}
 Consider a symmetric SMG, such as the one in Figure \ref{smg}, with two robots length $l$ that are at distance $d$ from the point of collision $\mathcal{Q}$. For the robots to reach $\mathcal{Q}$ without colliding, one of the robots must slow down (or speed up) by a factor of $\zeta \geq 2$ in the limit as $(d, \epsilon) \to (l, 0)$.
% such that $v^1_t \geq 2 v^2_t$ or $v^2_t \geq 2 v^1_t$.
    \label{thm: factor_of_2}
\end{theorem}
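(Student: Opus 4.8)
The plan is to collapse the two-robot, two-dimensional crossing of Figure~\ref{smg} into a one-dimensional timing problem at the conflict point $\mathcal{Q}$, and then read off the minimal velocity ratio from the requirement that the two robots never occupy $\mathcal{Q}$ simultaneously. First I would fix the symmetric configuration: both robots travel along their straight preferred trajectories $\tilde\Gamma^i$, which meet at $\mathcal{Q}$; each robot has body length $l$ along its direction of travel; each leading point is at path-distance $d$ from $\mathcal{Q}$; and, by the symmetry of the SMG, both would otherwise pass $\mathcal{Q}$ at a common nominal speed.

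\emph{Step 1: reduce to the occupation intervals of $\mathcal{Q}$.} Parametrizing robot $r$ by the arclength travelled by its leading point, a robot moving at speed $v_r$ has its body overlap a neighborhood of $\mathcal{Q}$ precisely during the interval $I_r = \big[\,d/v_r,\ (d+l)/v_r\,\big]$: it takes time $d/v_r$ for the leading point to reach $\mathcal{Q}$ and a further $l/v_r$ for the trailing point to clear it. The parameter $\epsilon>0$ bundles together the finite width of the conflict region and the CBF safety buffer $\delta$; it enters $I_r$ only as an additive lower-order term that vanishes in the stated limit.

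\emph{Step 2: impose collision-freeness and solve for $\zeta$.} By the collision-avoidance condition~\eqref{eq:collision_avoidance}, $C^i(x^i_t)\cap C^j(x^j_t)=\emptyset$ for all $t$, so $I_i$ and $I_j$ must be disjoint. Without loss of generality robot $1$ clears $\mathcal{Q}$ before robot $2$ enters, which (after discarding the $\epsilon$-correction) is exactly $(d+l)/v_1 \le d/v_2$. Writing $\zeta = v_1/v_2 \ge 1$ for the ratio of the two speeds --- the factor by which robot $2$ has been slowed, or equivalently robot $1$ sped up --- this rearranges to $\zeta \ge (d+l)/d = 1 + l/d$. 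Since the robots start perfectly symmetric, no smaller ratio can separate the two intervals, so the bound is necessary, not merely sufficient. Letting $(d,\epsilon)\to(l,0)$ gives $\zeta \ge 1 + l/l = 2$, which is the claim.

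\emph{Main obstacle.} The arithmetic in Steps~1--2 is routine; the real work is making Step~1 precise. I would need to pin down the body and clearance conventions (whether $d$ is measured to the leading point, the center, or the trailing point, and whether $\mathcal{Q}$ is a point or a region whose width is absorbed into $\epsilon$), show that these choices perturb the bound only at order $\epsilon$, and argue that the fully symmetric instance is genuinely the worst case so that the inequality is tight. As a consistency check I would also verify that substituting the limiting geometry together with $\zeta=2$ into the liveness function~\eqref{eq:liveness_function} is compatible with the threshold $\ell_{thresh}=0.3$ used in the paper, since this is the quantitative link the theorem is meant to supply.
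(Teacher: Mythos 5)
Your argument is essentially the paper's: both reduce the crossing to a one-dimensional timing condition at $\mathcal{Q}$, requiring the faster robot to travel $d+l$ (clear the conflict point body-and-all) before the slower one travels $d$, which gives $\zeta \ge 1+l/d \to 2$ as $d \to l$. The only mismatch is your reading of $\epsilon$ --- in the paper it is the small difference between the two robots' nominal speeds (yielding the bound $\zeta \ge (1+l/d)(1-\epsilon/v^1_t)$), not a geometric conflict-region width or CBF buffer --- but since the correction vanishes in the stated limit either way, the conclusion is unaffected.
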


% We defer the proofs to supplementary material due to lack of space. 

\begin{proof}
%     In order for robot $1$ to clear the distance $d$ in the time that robot $2$ reaches the $\mathcal{Q}$, robot $2$ must slow its speed by a factor of 
% \begin{equation}
% \zeta \geq \left(1 + \frac{l}{d}\right)\left(1 - \frac{\epsilon}{v^1_t}\right)    
% \end{equation}

In a symmetric SMG consisting of two robots, we assume that the speeds of the robots are nearly identical. In practice, two speeds will likely differ by some small amount, which we denote as $\epsilon$ giving $0 < \lvert v^1_t - v^2_t\rvert \leq \epsilon$. Suppose, without loss of generality, that robot $1$ is slightly faster than robot $2$ by $\epsilon$. Thus, $v^1_t - v^2_t = \epsilon \implies v^2_t = v^1_t -\epsilon$. In order for robot $1$ to clear the distance $d$ in the time that robot $2$ reaches the $\mathcal{Q}$, robot $1$ must increase its speed by a factor of $\zeta$. Therefore $v^1_{t+1} \gets \zeta v^1_{t}$ while $v^2_{t+1} = v^1_t -\epsilon$. We assume that speed changes are instantaneous as the agents enter the SMG.

Now time taken by robot 2 to reach Q is $t_2 = \frac{d}{v^2_{t+1}} = \frac{d}{v^1_t -\epsilon}$. And the time taken by robot 1 to \textit{clear} Q (that is, not just reach Q, but that the entire length $l$ should clear Q) is $t_1 = \frac{d + l}{v^1_{t+1}} = \frac{d + l}{\zeta v^1_t}$. Note that $t_2 \geq t_1$. Thus,

\begin{equation*}
    \frac{d}{v^1_t -\epsilon} \geq \frac{d + l}{\zeta v^1_t} 
\end{equation*}

Rearranging terms gives us Equation (13),

\begin{equation}
\zeta \geq \left(1 + \frac{l}{d}\right)\left(1 - \frac{\epsilon}{v^1_t}\right)     
\end{equation}

In the limit, $\zeta \to 2$ as $(d, \epsilon) \to (l, 0)$.
\end{proof}

From $\zeta = 2$, we find our relevant $\ell_{thresh}$ by noting that there is a relationship between the two values~\cite{chandra2024_socialgames}. The deadlock threshold corresponds to:

\begin{equation}
    \ell_j = \frac{\pi}{4} - \tan^{-1}\frac{1}{\zeta}
    \label{eq: l_thresh}
\end{equation}

Thus, from $\zeta = 2$, our corresponding $\ell_{thresh}$ is 0.3. We empirically validate this threshold through the results of our experiments.

% Original:
% From $\zeta = 2$, we find our relevant $\ell_{thresh}$ by noting that there is a relationship between the two values, formalized as follows:

% \begin{theorem}
% Consider the social mini-game, $\mathcal{S}$, defined in Theorem~\ref{thm: factor_of_2}. Then at time $t$,
% % \begin{equation}
% %     \ell_j(p_t^i,v_t^i)=\ell_i(p_t^j,v_t^j) = \frac{\pi}{4} - \tan^{-1}\frac{1}{\zeta}
% % \end{equation}
% \begin{equation}
%     \mathcal{S} \ \textnormal{exists} \iff \ell_j(p_t^i,v_t^i)=\ell_i(p_t^j,v_t^j) \leq \ell_\textnormal{thresh}
% \end{equation}

% where $\ell_\textnormal{thresh} := \frac{\pi}{4} - \tan^{-1}\frac{1}{2}$.
%     \label{thm: smg->l<tau}
% \end{theorem}
% We provide the proof in supplementary material. Thus $\zeta = 2$ corresponds to a $\ell_{thresh}$ of 0.3. We empirically validate this threshold through our experiment results. 

% From ~\cite{chandra2024_socialgames}, the deadlock threshold corresponds to:

% {\small\begin{equation}
%     \ell_j = \frac{\pi}{4} - \tan^{-1}\frac{1}{\zeta}
%     \label{eq: l_thresh}
% \end{equation}}

\paragraph{Deadlock Prevention}
   
   If deadlock was detected, our goal is for robots to decentrally avoid the deadlock by perturbing its speed in a minimally invasive manner. The idea is to design an approach that only changes the speed of the robot and not its planned trajectory or actual positions. In order to inform our velocity-based deadlock avoidance approach, we introduce the concept of \textit{liveness sets}, inspired by the concept of safety sets in CBFs. Liveness sets define a space fo joint speeds for all agents such that, if agents travel at those speeds, they will remain deadlock-free. 

    \begin{definition}
At any time $t$, given a configuration of $k$ robots, $x^i_t \in \mathcal{X}$ for $i\in [1,k]$, a \textbf{liveness set} is defined as a union of convex sets, $\mathscr{C}_\ell(t) \subseteq \mathbb{R}^k$ of joint speed $v_t = \left[v^1_t, v^2_t, \ldots, v^k_t\right]^\top$ such that $v^i_t \geq \zeta v^j_t$ for all distinct pairs $i,j$, $ \zeta \geq 2$. 

    \label{def: liveness_sets}
\end{definition}

As we showed in Theorem \ref{thm: factor_of_2} that a $\zeta \geq 2$ is required to guarantee that robots can navigate through a symmetric SMG without deadlocks, liveness sets guarantee that if $v_t \in \mathscr{C}_\ell(t)$, any pair of robots in our configuration will have feasible control inputs that guarantee forward motion. But if $v_t \notin \mathscr{C}_\ell(t)$, then we must perform a minimally invasive velocity perturbation. We define a minimally invasive perturbation as:
    \begin{definition}
        Minimally Invasive Deadlock Resolution: A deadlock-resolving strategy prescribed for robot $i$ at time $t$ with current heading angle $\theta_t^i$ is minimally invasive if:
    \begin{enumerate}
        {\small\item $\Delta \theta^i_t = \theta^i_{t+1} - \theta^i_t = 0$ (The agent does not deviate from the preferred trajectory).
        \item $v_{t+1}^i = v_t^i +\delta_{\text{opt}}(t)$, where $\delta_{\text{opt}}(t) = \arg\min_{\delta \in \mathbb{R}}\left\Vert v_t^i + \delta \right\rVert,$ such that $v_{t+1}^i \in \mathscr{C}_\ell(t)$ (i.e., robot with speed $v_{t+1}^i$ prevents or resolves a deadlock).  }
    \end{enumerate}
    \end{definition}

% a robot with speed $v_{t+1}^i$ prevents or resolves a deadlock.

 \noindent $\delta_{\text{opt}}(t)$ can be found by solving the following optimization problem: 
 
 % \rohan{eq. 13 is redundant, since the purturbation is happening in eq. 12. you need to show in the equations above that $v^i_{t+1}$ is parrt of $v_t\in \mathscr{C}_\ell(t)$. Make sure you fix everywhere in the paper, including papers to reflect the changes to eqn numbers}

        {\small\begin{subequations}
                \begin{align}
        \delta_{\text{opt}}(t)=&\underset{\delta\in\mathbb{R}}{\arg\min}\;\|v^i_t+\delta\|,
        \label{eqn:minimally_invasive_perturbation}\\
        &v^i_{t+1}=v_t^i+\delta\\
        &u^i_t\in\mathscr{U}^i,\;\;u^i_{t+1}\in\mathscr{U}^i\\
        &\left(x^i_{t+1},u^i_{t+1}\right)\notin\mathcal{D}^i(t+1) 
        \label{eqn:minimally_invasive_optimization}
        \end{align}
                \end{subequations}}

    where deadlock set $\mathcal{D}^i$ is defined as follows: 
    {\small\begin{align}
        \mathcal{D}^i(t)=\left\{\left(x^i_t,u^i_t\right):x^i_t\notin\mathcal{X}_g,\;\;u^i_t=0 \  \textnormal{for some threshold} > 0\right\}
        \label{eqn:deadlock_set}
    \end{align}}
    If each $v^i_t\in \mathscr{C}_\ell(t)$, then there is no deadlock. If, however, $v_t \notin \mathscr{C}_\ell(t)$, then robot $i$ will adjust $v^i_t$ such that $v_t$ is projected onto the nearest point in $\mathscr{C}_\ell(t)$. We establish that our approach ensures both safety and liveness guarantees. The use of Control Barrier Functions (CBFs) inherently enforces safety by ensuring that the system remains within a certified safe set. To guarantee liveness, we demonstrate that a feasible velocity perturbation-based solution always exists and is unique under reasonable assumptions.

    % This projection is performed by using: 
    % {\small\begin{equation}
    % \widetilde v_t = \arg\min_{\mu \in \mathscr{C}_\ell(t)} \left \lVert v_t - \mu\right \rVert_2
    % \label{eq: perturbation}
    % \end{equation}}

% \rohan{Because you copied over things, One thing to be doubly careful about is to go through this deadlock section and make sure that things are making sense and all things are explained and flow - because the SMG paper's deadlock section was huge, so you need to organically condense it, not just copy over the entire thing, and delete stuff}

\begin{theorem} A solution to the optimization problem given in Equation~\ref{eqn:minimally_invasive_perturbation} always exists and is unique if \( v^i_t \neq v^j_t \) for any robots $i$ and $j$.
\end{theorem}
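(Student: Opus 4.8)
The plan is to treat the optimization in Equation~\ref{eqn:minimally_invasive_perturbation} as a projection of the scalar speed $v^i_t$ onto the feasible region carved out by the constraints, and then argue separately that (i) the feasible set is nonempty, so a solution exists, and (ii) the objective $\|v^i_t + \delta\|$ is strictly convex in $\delta$ over a convex feasible set, so the minimizer is unique. I would first unpack what the constraints actually cut out in $\delta$-space: the dynamics constraint $v^i_{t+1} = v^i_t + \delta$ is just a change of variables, the input-admissibility constraints $u^i_t, u^i_{t+1} \in \mathscr{U}^i$ translate (via the control-affine dynamics) into an interval of admissible $\delta$, and the deadlock-avoidance constraint $(x^i_{t+1}, u^i_{t+1}) \notin \mathcal{D}^i(t+1)$ from Equation~\ref{eqn:deadlock_set} simply forbids $\delta = -v^i_t$ (i.e.\ forbids coming to rest away from the goal). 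So the feasible set is an interval (or finite union of intervals) with a single point punctured out.

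For existence, the key step is to exhibit one admissible nonzero perturbation. Here I would invoke Theorem~\ref{thm: factor_of_2} and Definition~\ref{def: liveness_sets}: since $v^i_t \neq v^j_t$, the two speeds are ordered, and the faster robot can scale its speed up by a factor $\zeta \geq 2$ (or the slower one scale down) to land in the liveness set $\mathscr{C}_\ell(t)$, which by construction consists of joint speeds with feasible forward-motion control inputs. That gives a concrete feasible $\delta$ bounded away from $-v^i_t$, so the feasible set is nonempty and the infimum of a continuous coercive function over a nonempty closed-ish set is attained (the punctured point is never the candidate minimizer unless $v^i_t = 0$, which the liveness mechanism is precisely there to rule out). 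For uniqueness, the objective $\delta \mapsto \|v^i_t + \delta\|$ is strictly convex (it is $|v^i_t + \delta|$ in the scalar case, strictly convex away from its kink, and the kink at $\delta = -v^i_t$ is exactly the excluded point), and the admissible $\delta$-interval is convex, so the projection is unique — this is the standard Hilbert-space projection-onto-a-convex-set argument specialized to $\mathbb{R}$.

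The main obstacle I anticipate is handling the non-convexity introduced by the punctured point $\delta = -v^i_t$ cleanly: strictly speaking the feasible set is convex-minus-a-point, which is not convex, so I cannot cite the projection theorem verbatim. The fix is to observe that the excluded point is the unique global unconstrained minimizer of the objective, hence removing it cannot create a second local minimizer on the remaining interval(s); one then argues the minimizer lies on the boundary of the admissible interval (the nearest admissible speed consistent with $\mathscr{C}_\ell(t)$ and $\mathscr{U}^i$) and is unique because strict convexity forces the restriction of the objective to each connected piece to be strictly monotone or strictly unimodal. A secondary subtlety worth a sentence is making precise the hypothesis "$v^i_t \neq v^j_t$ for any $i,j$": this is what guarantees the liveness-set target is reachable by a one-sided speed change and, simultaneously, that $v^i_t \neq 0$ cannot be forced, so the punctured point and the optimum never coincide.
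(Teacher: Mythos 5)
Your framing differs from the paper's: you work in the scalar perturbation variable $\delta$ for robot $i$ alone, exhibiting a feasibility witness via Theorem~\ref{thm: factor_of_2} and arguing uniqueness by convexity on intervals, whereas the paper works in the joint speed space $\mathbb{R}^k$, views the infeasible region as an open convex polytope $P$ with $\partial P\subseteq\mathscr{C}_\ell(t)$, gets existence from compactness and continuity of the distance to $\partial P$, and gets uniqueness by arguing that the projection onto $\partial P$ is multivalued only when the point is equidistant from several faces, which it identifies with the case $v^i_t=v^j_t$. Your existence argument is fine (and arguably more constructive than the paper's), but your uniqueness argument has a genuine gap that the change of framing creates.

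The gap is this: by Definition~\ref{def: liveness_sets}, $\mathscr{C}_\ell(t)$ is a \emph{union} of convex sets (for $k=2$, the two cones $v^i\geq\zeta v^j$ and $v^j\geq\zeta v^i$), so in $\delta$-space the feasible set is a union of two disjoint rays — a ``speed up'' branch and a ``slow down'' branch — not a single interval with a puncture. Your argument correctly gives a unique minimizer \emph{within each} connected component (via monotonicity of $|v^i_t+\delta|$ on a ray not containing $-v^i_t$; note this is piecewise affine, not ``strictly convex,'' so uniqueness must come from the geometry rather than from strict convexity of the objective), but it never rules out a tie \emph{between} the two branches. Moreover, the hypothesis $v^i_t\neq v^j_t$ does not rule it out in your framing: the nearest points of the two branches are $\delta=\zeta v^j_t-v^i_t$ and $\delta=v^j_t/\zeta-v^i_t$, which are equidistant from $0$ precisely when $v^i_t=\tfrac{\zeta^2+1}{2\zeta}v^j_t$ (e.g.\ $v^i_t=\tfrac54 v^j_t$ for $\zeta=2$), a configuration entirely consistent with $v^i_t\neq v^j_t$. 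The hypothesis does tie-breaking work only in the paper's symmetric joint-space projection, where the equidistance locus is the bisector $v^i=v^j$. You instead spend the hypothesis on feasibility and on keeping the punctured point away from the optimum, which leaves the cross-branch tie unaddressed; to close the proof you would need either to adopt the joint-space projection, or to add an explicit rule (and justification) selecting one branch when both are equidistant.
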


\begin{proof}
    The set of feasible velocities, given by \( \mathscr{C}_\ell(t) \), is the complement of an open complex polytypic set $P$, whose boundary \( \partial P \) is a subset of \( \mathscr{C}_\ell(t) \). Since $P$ is open and convex, the minimum Euclidean distance projection from any interior point of $P$ to \( \partial P \) always exists, as \( \partial P \) is a compact set with a finite number of edges and the distance function is continuous. Thus a solution always exists.

    Uniqueness follows from the fact that the projection of a point onto the boundary of a convex polytype is unique unless the point is equidistant from multiple boundary points. This case only occurs if the robot $i$ and $j$ have identical velocities, which contradicts our assumption of \( v^i_t \neq v^j_t \). Thus, our solution is unique.    
\end{proof}
The approach outlined in this section make up the Liveness aspect of our Universal Safety-Liveness Certificate, as seen in Figure~\ref{technical_flow}.

\subsection{Overall Algorithm}
Our sequential algorithm is formulated as follows and  illustrated in Figure~\ref{technical_flow}. First, each robot observes its state. The liveness function (Equation~\ref{eq:liveness_function}) is then computed and evaluated against the threshold to determine whether the robots are at risk of a deadlock. Next, Robot 1 perceives its point cloud, computes its CBF constraints, and computes its control input and corresponding force, moment, and desired velocity. If a potential deadlock was detected earlier, the deadlock resolution procedure is executed as per Equation~\ref{eqn:minimally_invasive_perturbation}. Robot 1 then updates its state. 
Next, Robot 2 follows the same process: perceiving its point cloud, computing its CBF constraints, and determining its control input. If a potential deadlock was previously detected, the deadlock resolution procedure is applied. This process repeats iteratively until both robots have reached their destinations, a collision occurs, or maximum time step $T$ is reached. 

\section{EXPERIMENTS \& RESULTS}
\label{Experiments}

In this section, we evaluate the performance of our proposed framework, \framework{}, in multi-robot navigation scenarios. The goal of our experiments is to answer the following key questions: $(i)$ does \framework{} successfully prevent collisions in constrained multi-agent environments using high-dimensional inputs? $(ii)$ does \framework{} avoid deadlocks and ensure continuous motion for all agents? and $(iii)$ how does \framework{} compare to existing methods in terms of successful navigation and smoothness? To answer these questions, we conduct experiments in a Python-based simulator~\cite{githubGitHubHybridRoboticsdepth_cbf} with quadrotor robots. We first discuss the simulation setup, including key parameters and environmental conditions. We then describe the baselines and comparison methods used for evaluation. Finally, we present and analyze the results, demonstrating the impact of \framework{} on multi-agent safety, liveness, and navigation efficiency. 

\subsection{Simulation Design and Experiment Setup}
We evaluate \framework{} in a Python-based simulation environment~\cite{githubGitHubHybridRoboticsdepth_cbf}, simulating two robotic agents ($k=2$) in constrained environments. We assess navigation in a doorway scenario and an intersection scenario, both requiring robot coordination to avoid collision. Each experiment is run until both robots reach their respective goals, a collision occurs, or movement ceases due to a deadlock lasting the duration of the simulation. Each robot updates its motion every 0.02 seconds, meaning a full control update (both robots moving once) occurs every 0.04 seconds. The maximum number of time steps is set at 400 ($T=400$), capping the total runtime at 16 seconds. If neither robot reaches its goal within this limit, the run is classified as a deadlock. Collision detection is performed by modeling robots as spheres with a 0.1 m radius, where a collision is registered if the center-to-center distance between two robots is less than 0.2 m. Static obstacles, represented as spheres with a 0.25 m radius, are placed at predefined positions. A robot is considered to have reached its destination once its center is within 0.1 m of its goal position, at which point it is removed from the simulation. The quadrotor robots begin from rest, and no explicit constraints are imposed on their maximum velocity, acceleration, deceleration, or angular velocity. The quadrotor's dynamics follow Equations \ref{eq:quadrotor_dynamics1} and \ref{eq:quadrotor_dynamics2}.

% \subsubsection{Scenario Configurations}

We evaluated these approaches in two SMG scenarios:
\begin{itemize}
\item \textbf{Doorway SMG:} Two robots navigate toward a common destination through a narrow doorway. The two robots start equidistant from a narrow 0.3m-wide passage.
\item \textbf{Intersection SMG:} Two robots approach a four-way intersection at right angles. The two robots start equidistant from the intersection. 
\end{itemize}
% The specific configurations of both scenarios are shown in Table~\ref{table:smg_config}.

\subsection{Baselines and Metrics}
To comprehensively evaluate \framework{}, we conduct comparisons against the following baselines, as well as ablation studies:
\begin{enumerate}
\item \textbf{Depth-CBF (Single-Agent)} \cite{desa2024_pointcloud}: We utilize Depth-CBF to control a single robot while treating the second robot as a dynamic obstacle with predefined positions. In contrast, our approach enables two robots to navigate independently in a fully decentralized manner. 
% This highlights the limitations of single-agent Depth-CBF in handling dynamic interactions.
\item \textbf{\framework{} without Liveness (LoL)}: This is an ablation baseline where we eliminate our deadlock prevention algorithm. 
% pplies Depth-CBF to multi-agent systems while omitting the liveness-enhancing deadlock avoidance algorithm.
\item \textbf{MPC-CBF}~\cite{zeng2021mpc}: MPC-CBF aligns with our approach of leveraging CBFs for safety. It combines Model Predictive Control (MPC) with Control Barrier Functions (CBFs) for collision avoidance. Unlike \framework{}, MPC-CBF explicitly formulates an optimization problem over a finite horizon. 
\item \textbf{MPNet}~\cite{mpnet2020}: MPNet is a state of the art learning-based motion planner which uses neural networks to generate paths directly from raw point cloud data. Its reliance on point cloud data aligns with our approach, making it a relevant baseline for comparison. We expand MPNet to a multi-robot setup, with the two robots trained on the same neural net policy.
		
\item \textbf{ORCA}~\cite{orca2020}: ORCA is a widely used collision avoidance method that utilizes safe velocity sets to generate collision-free motion for multiple agents. We adapt ORCA to our simulation by implementing its collision avoidance strategy while keeping the overall approach to generating CBFs unchanged. ORCA's reactive collision avoidance approach is analogous to our liveness algorithm, as both dynamically adjust agent behavior to ensure safe navigation in multi-robot environments.
\end{enumerate}

For each scenario, we perform 50 independent runs and measure key performance metrics, which were recorded for all scenarios and summarized in Table~\ref{table_combined}. 

We track the number of failed runs due to robot collisions (Collisions), the number of failed runs due to robot deadlocks (Deadlocks), and the percentage of successful runs where both robots reached their destinations without collision or deadlock (Success \%). We measure the time required for Robot 1 and Robot 2 to reach their respective destinations (Time R1, R2), with N/A indicating failure to reach the goal. Additionally, we track each robot's agility using average velocity (Vel R1, R2) while navigating through the constrained region (e.g., doorway or intersection), and also measure the magnitude of velocity changes through the constrained region ($|\Delta V|$). This metric reflects the smoothness of motion, with N/A assigned if a collision or deadlock occurred before reaching the constrained region. For our velocity metrics comparisons (velocity and ($|\Delta V|$), we focus specifically on Robot 2, as it is the agent that slows more significantly due to our velocity perturbation approach. Thus, understanding Robot 2's agility and smoothness of its deceleration provides the most insight into effective strategies for deadlock prevention.

    \begin{figure}[t]
        \centering
        % Subfigure 1a
        \begin{subfigure}{0.325\columnwidth}
            \includegraphics[width=\textwidth,height=0.115\textheight]{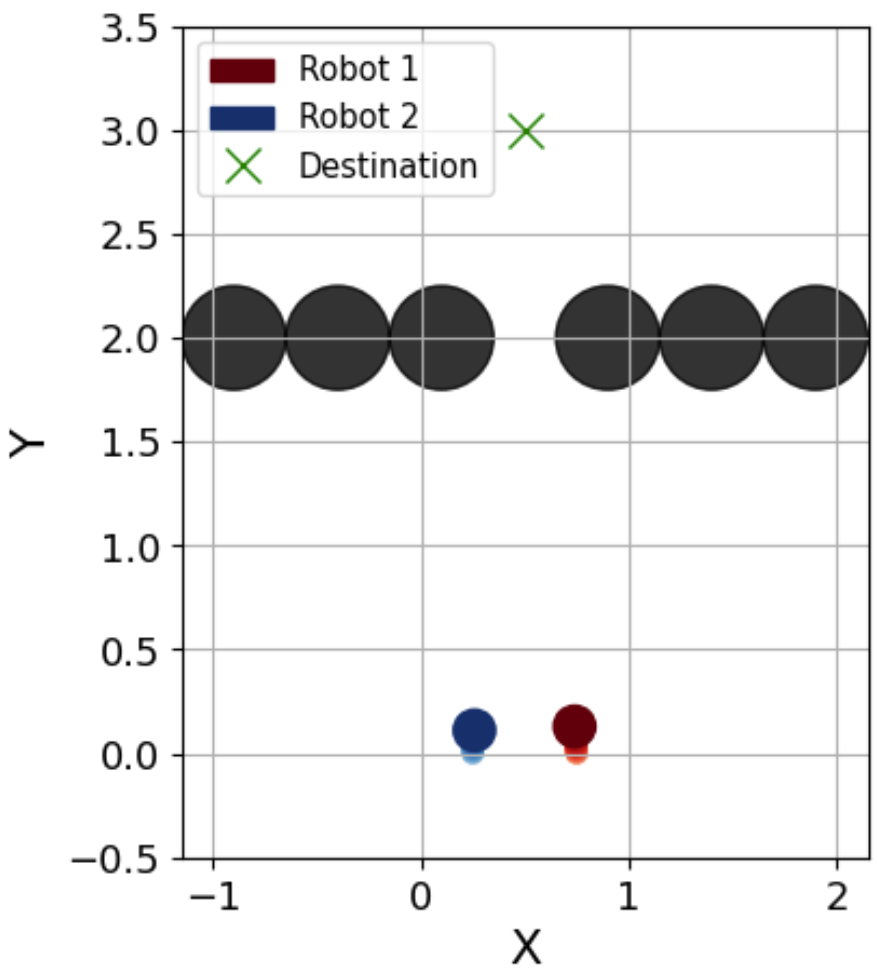}
            \caption{Robots moving towards the opening.}
            \label{fig:1a}
        \end{subfigure}
        \begin{subfigure}{0.325\columnwidth}
            \includegraphics[width=\textwidth,height=0.115\textheight]{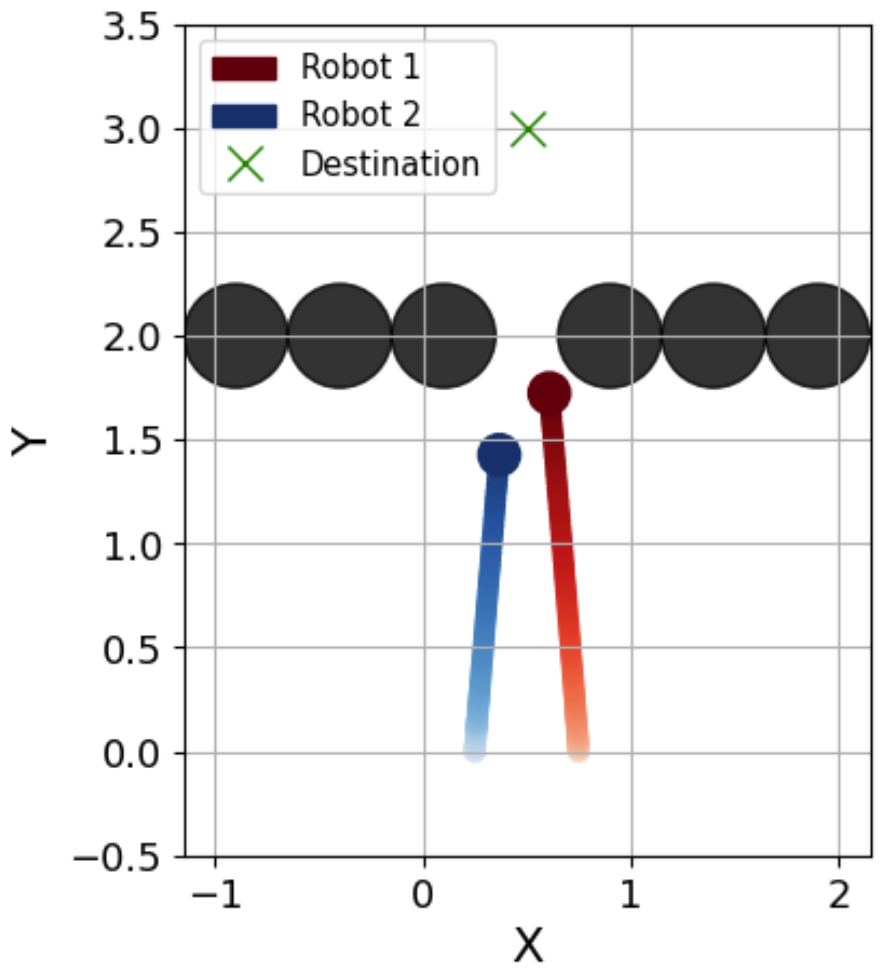}
            \caption{Robot 1 proceeds first.}
            \label{fig:1b}
        \end{subfigure}
        \begin{subfigure}{0.325\columnwidth}
            \includegraphics[width=\textwidth,height=0.115\textheight]{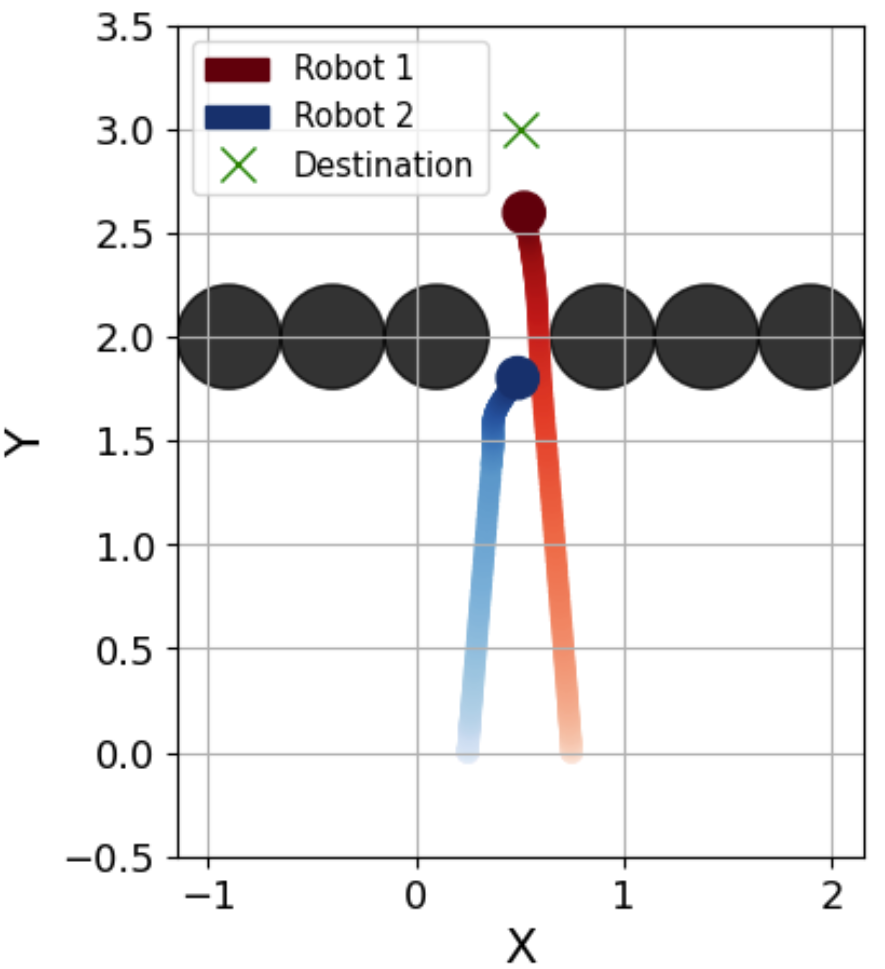}
            \caption{Robot 2 follows 1 through the opening.}
            \label{fig:1c}
        \end{subfigure}
        \caption{Doorway: Multi-Agent Robot Trajectories with Liveness}
        \label{liveness}
        \vspace{-10pt}
    \end{figure}

\subsection{Results and Observations}

\begin{table*}[t]
\captionsetup{position=bottom}
\begin{center}
\resizebox{\textwidth}{!}{%
\begin{tabular}{lccccccccc}
% \specialrule{1.2pt}{0pt}{0pt} % Thicker top line
\toprule[1.2pt]
\multicolumn{9}{c}{\textit{Doorway Scenario}} \\
\midrule
Method & Collisions $\downarrow$ & Deadlocks $\downarrow$ & Success(\%) $\uparrow$& Time R1 (s) $\downarrow$& Time R2 (s) $\downarrow$& Vel R1 (u/s) $\uparrow$& Vel R2 (u/s) $\uparrow$& $|\Delta V|$ R1 (u/s) $\downarrow$&$|\Delta V|$ R2 (u/s) $\downarrow$\\
\midrule
ORCA \cite{orca2020}& 50 & 0 & 0 & N/A & N/A & \textbf{1.4178} & \textbf{1.419} & \textbf{1.971e-02} & 1.975e-02 \\
MPC-CBF \cite{zeng2021mpc}& 50 & 0 & 0 & N/A & N/A & N/A & N/A & N/A & N/A \\
MPNet \cite{mpnet2020}& 50 & 0 & 0 & N/A & N/A & N/A & N/A & N/A & N/A \\
Single & 50 & 0 & 0 & N/A & N/A & 1.248 & N/A & 2.294e-02 & N/A \\
\framework{} w/o Liveness & 50 & 0 & 0 & N/A & N/A & 1.1209 & 1.0953 & 2.356e-02 & 2.279e-02 \\
\framework{} & \textbf{0} & \textbf{0} & \textbf{100} & \textbf{9.12} & \textbf{11.68} & 1.0332 & 0.3143 & 2.446e-02 & \textbf{1.424e-02} \\
% \midrule
% \specialrule{0.8pt}{0pt}{0pt} % Slightly thicker line between scenarios
\toprule[1.2pt]
\multicolumn{9}{c}{\textit{Intersection Scenario}} \\
\midrule
ORCA\cite{orca2020} & 50 & 0 & 0 & N/A & N/A & \textbf{3.8008} & \textbf{3.2846} & 1.255e-01 & 5.035e-01 \\
MPC-CBF \cite{zeng2021mpc}& 0 & 50 & 0 & N/A & N/A & N/A & N/A & N/A & N/A \\
MPNet \cite{mpnet2020} & 50 & 0 & 0 & N/A & N/A & N/A & N/A & N/A & N/A \\
Single & 50 & 0 & 0 & N/A & N/A & 2.5753 & N/A & 1.207 e-01 & N/A \\
\framework{} w/o Liveness & 50 & 0 & 0 & N/A & N/A & 0.9032 & 0.7801 & \textbf{8.2195e-03} & \textbf{1.1649e-02} \\
\framework{} & \textbf{0} & \textbf{0} & \textbf{100} & \textbf{6.56} & \textbf{9.84} & 2.6699 & 2.7831 & 2.808e-01 & 9.7697e-02 \\
\bottomrule[1.2pt]
\end{tabular}%
}
\end{center}
\caption{Experiment Results for Doorway and Intersection Scenarios, averaged over 50 runs. The best results for each category are bolded, with $\uparrow$ indicating a higher value is preferable, and $\downarrow$ indicating a lower value is preferable. R1 and R2 denote Robot 1 and Robot 2, respectively. Time denotes time required to reach the goal, Vel represents the velocity through the constrained region (i.e., doorway or intersection), and $|\Delta V|$ quantifies velocity smoothness, measured as the magnitude of velocity change within the constrained region.} 
\label{table_combined}
\vspace{-10pt}
\end{table*}

    \begin{figure}[t]
        \centering
        % Subfigure 1a
        \begin{subfigure}[b]{0.49\linewidth}
            \includegraphics[width=0.85\textwidth]{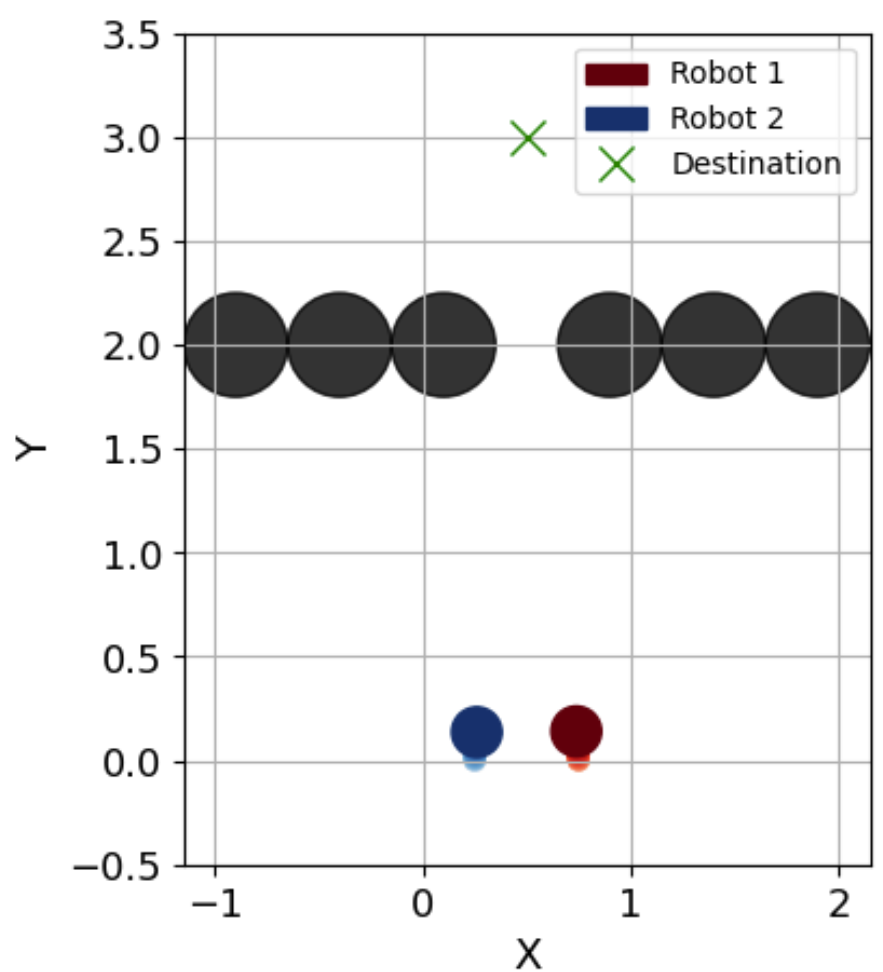}
            \caption{Robots moving towards the opening.}
            \label{fig:1a}
        \end{subfigure}
        \hfill
        % Subfigure 1b
        \begin{subfigure}[b]{0.49\linewidth}
            \includegraphics[width=0.85\textwidth]{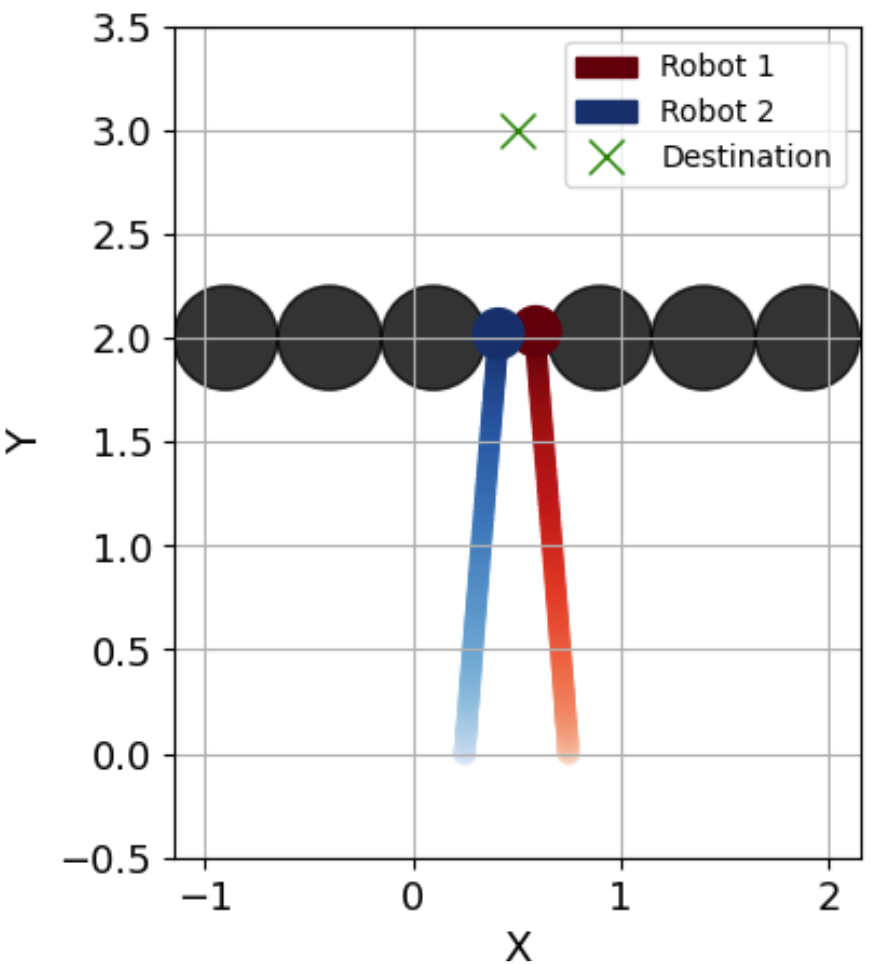}
            \caption{Robots attempt to enter doorway simultaneously, causing a collision.}
            \label{fig:1b}
        \end{subfigure}
        \hfill
    
        \caption{Doorway: Multi-Agent Robot Trajectories, No Liveness}
        \label{no_liveness}
    \end{figure}

   \begin{figure}[t]
        \centering
        % Subfigure 1a
        \begin{subfigure}[b]{0.49\linewidth}
            \includegraphics[width=0.85\linewidth]{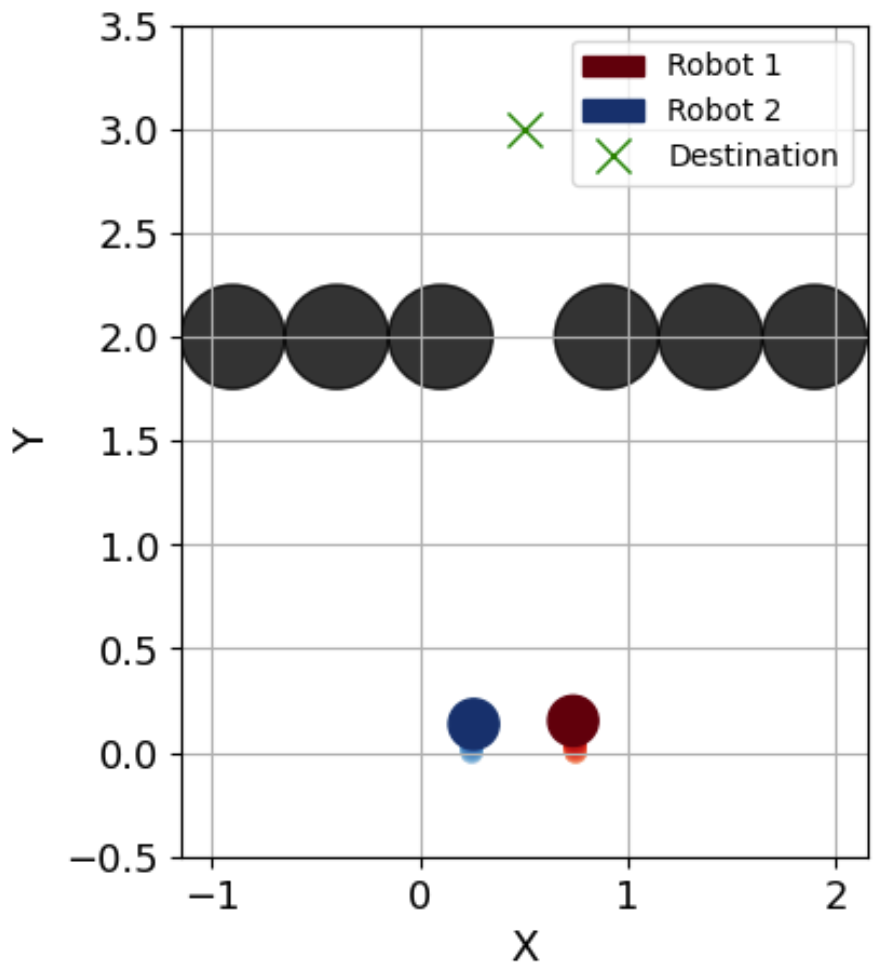}
            \caption{Robots moving towards the opening.}
            \label{fig:1a}
        \end{subfigure}
        \begin{subfigure}[b]{0.49\linewidth}
            \includegraphics[width=0.85\linewidth]{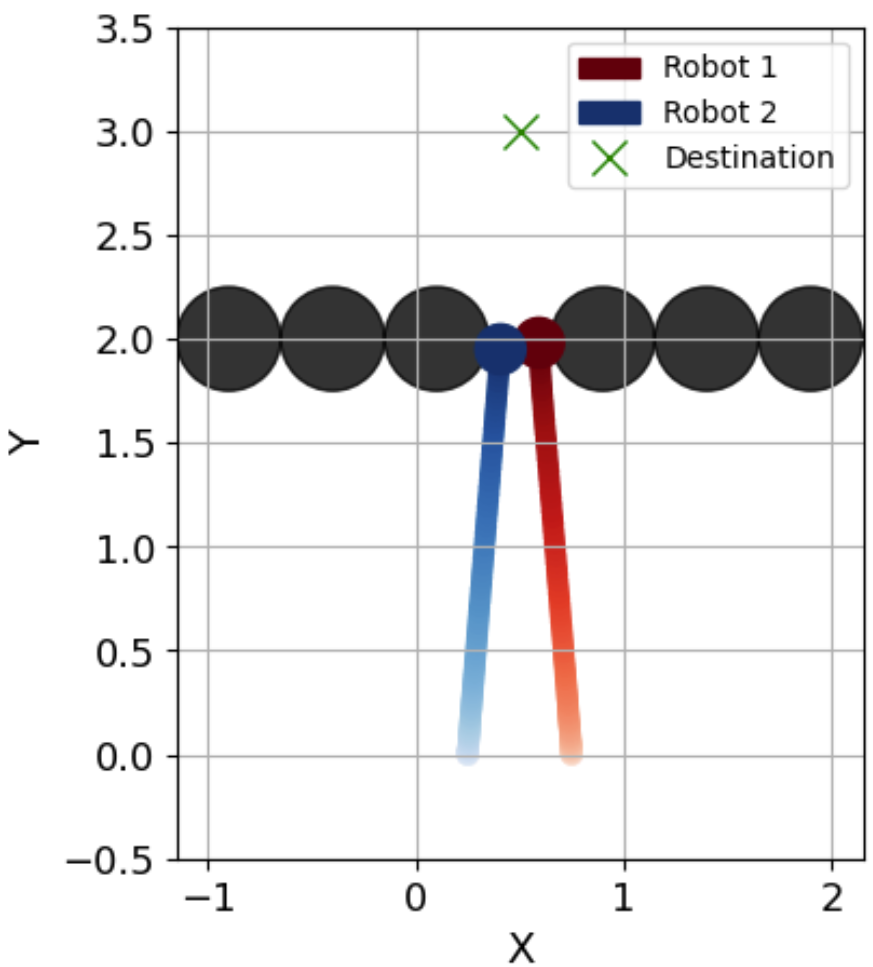}
            \caption{Robot 1 collides with dynamic obstacle.}
            \label{fig:1b}
        \end{subfigure}    
        \caption{Doorway: Single Agent Robot Trajectory, with Dynamic Obstacle Representing Robot 2}
        \label{hybrid}
    \end{figure}

\paragraph{Doorway SMG}

Figure~\ref{liveness} illustrates the robot trajectories under \framework{}. Potential deadlocks are detected early, prompting Robot 2 to reduce its speed and allow Robot 1 to pass through the doorway first. Once Robot 1 clears the doorway, Robot 2 takes its turn through the doorway.

In contrast, Figure~\ref{no_liveness} shows the results of LoL. Both robots attempt to enter the doorway simultaneously, resulting in a collision and a failed run. Similarly, when we run Depth-CBF on a single agent, with another pre-programmed dynamic obstacle representing a second robot, we also get a collision in the doorway, as seen in Figure~\ref{hybrid}.

% Average velocities in constrained areas are lower for \framework{}, reflecting its prioritization of safety through velocity perturbations. \framework{} exhibits the smoothest navigation in the doorway. 

% We note that while some baselines achieve higher agility or smoother motion, they all do so at the cost of increased collisions or deadlocks. Since successful navigation is the primary objective, \framework{} remains the most effective approach. Our study demonstrates the critical role of the liveness algorithm, as its absence leads to significantly reduced success rates.

% Figure~\ref{door_distance_cbf} provides additional insights by showing the closest distances from each robot to static obstacles, as well as the inter-robot distance. The results demonstrate how MADCL ensures sufficient separation throughout the navigation process, particularly when navigating through constrained spaces.

 % \begin{figure}[t]
 %      \centering
 %      %\framebox{\parbox{3in}{}}
 %      \includegraphics[scale=0.35]{Figures/distance_cbf.png}
 %      \caption{Distance CBF, Static Obstacle}
 %      \label{distance_cbf}
 %   \end{figure}

\begin{figure}[t]
    \centering
    \begin{subfigure}{0.49\columnwidth}
        \centering
        \includegraphics[width=\linewidth]{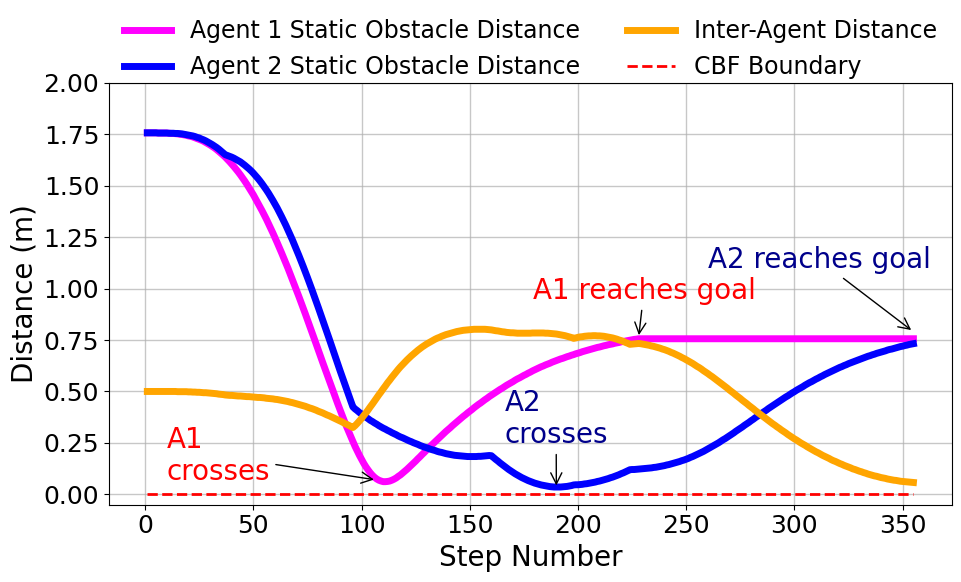}
        \caption{Distance CBF, Doorway}
        \label{door_distance_cbf}
    \end{subfigure}
    \hfill
    \begin{subfigure}{0.49\columnwidth}
        \centering
        \includegraphics[width=\linewidth]{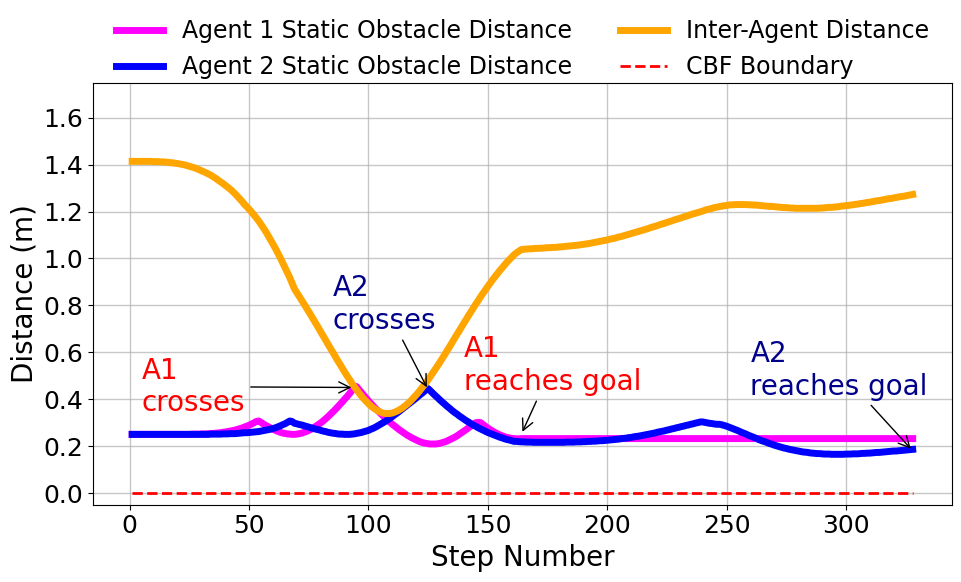}
        \caption{Distance CBF, Intersection}
        \label{int_distance_cbf}
    \end{subfigure}
    \caption{Distances between each agent and the closest static obstacle, as well as the inter-agent distance, plotted over time. The CBF boundary is included to indicate the safety threshold for maintaining safe navigation.}
    \label{combined_cbf}
\end{figure}

Figure~\ref{door_distance_cbf} illustrates how \framework{} maintains safe distances throughout the simulation. It shows each agent's distance to the closest static obstacle and inter-agent distances at every step. The plot highlights Agent 1 reaching the goal first, demonstrating effective movement coordination between robots through the doorway. Figure~\ref{door_liveness_cbf} highlights the liveness values calculated at each step. When the liveness value falls below the threshold of 0.3, the deadlock avoidance algorithm is triggered, ensuring proactive adjustments to prevent potential conflicts. This behavior occurs multiple times during the simulation, particularly in early steps where potential deadlocks are most likely to arise.
% The algorithm ensures smooth transitions by applying minimal velocity perturbations, as shown in Figure~\ref{velocity_doorway}.

\begin{figure}[t]
    \centering
    \begin{subfigure}{0.49\columnwidth}
        \centering
        \includegraphics[width=\linewidth]{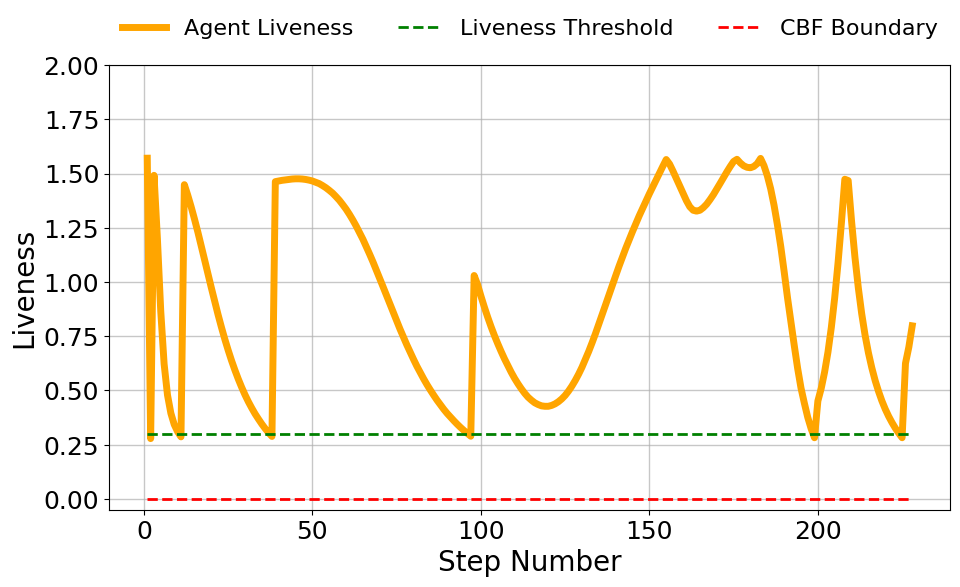}
        \caption{Liveness CBF, Doorway}
        \label{door_liveness_cbf}
    \end{subfigure}
    \hfill
    \begin{subfigure}{0.49\columnwidth}
        \centering
        \includegraphics[width=\linewidth]{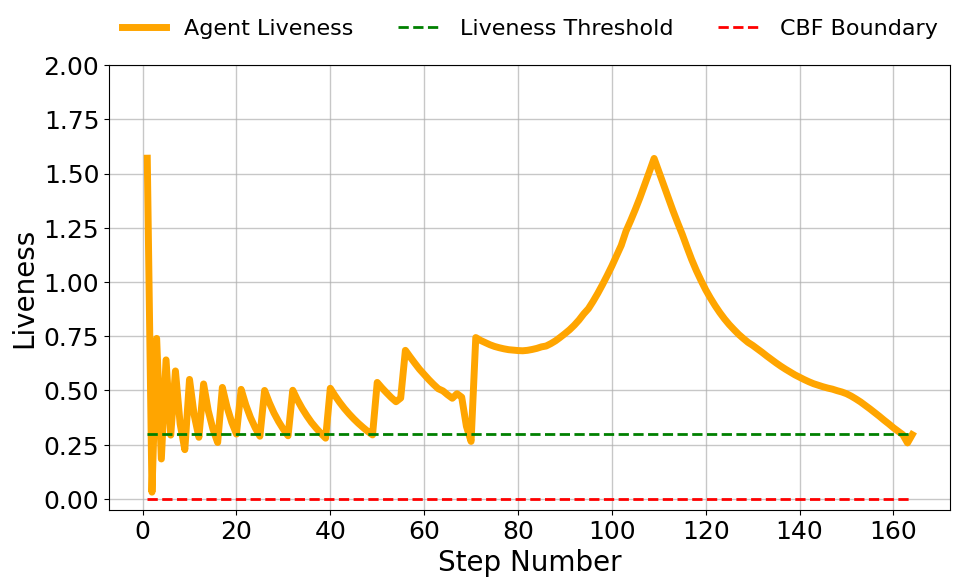}
        \caption{Liveness CBF, Intersection}
        \label{int_liveness_cbf}
    \end{subfigure}
    \caption{Liveness values computed at each time step, with liveness threshold and CBF boundary overlaid. Velocity perturbation is performed when liveness value falls below the liveness threshold.}
    \label{liveness_cbf}
\end{figure}

Figure~\ref{velocity_doorway} illustrates the velocity profiles of both robots before and after the doorway. The seamless coordination highlights \framework{}’s effectiveness in resolving potential conflicts in real time.

% After leaving the starting point, Robot 1 gradually increases its speed, while Robot 2 actually slows down as the liveness algorithm detects a potential deadlock and adjusts its speed to let Robot 1 pass first. As Robot 1 nears the doorway, it slows down to safely navigate through the narrow opening, briefly allowing Robot 2 a slightly higher speed. However, as Robot 1 clears the doorway, Robot 2 slows to wait its turn through the doorway. This sequence enables both robots to navigate the constrained area successfully. 

%DOORWAY ONLY
 % \begin{figure}[t]
 %      \centering
 %      %\framebox{\parbox{3in}{}}
 %      \includegraphics[scale=0.3]{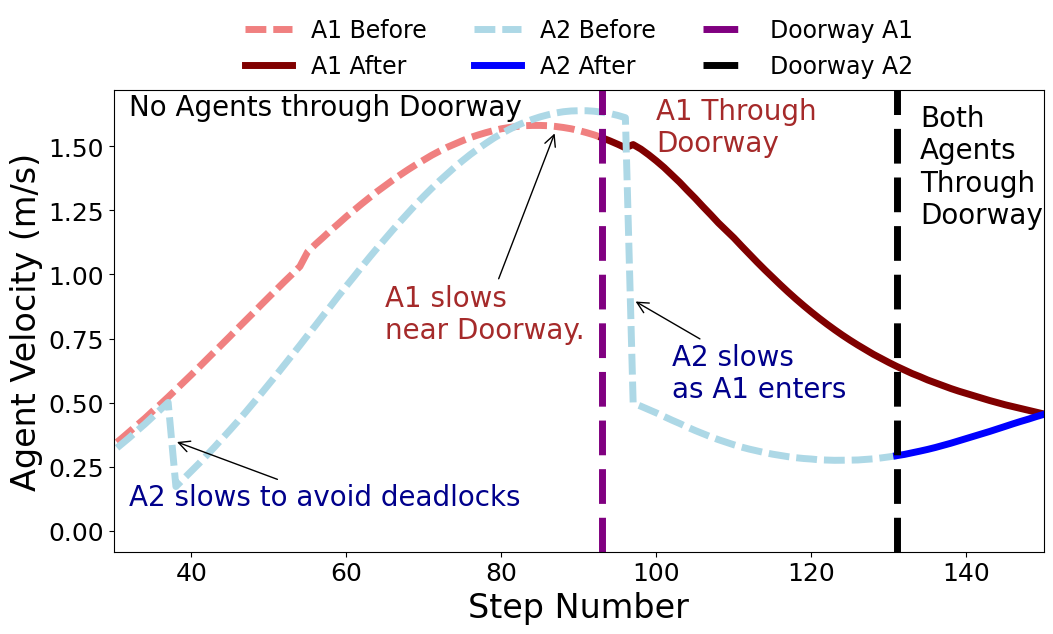}
 %      \caption{{Velocity profiles of both robots using \framework{} framework, shown before and after navigating through a doorway, highlighting the velocity adjustments made to ensure safe and deadlock-free passage.}}
 %      \label{velocity_doorway}
 %      \vspace{-10pt}
 %   \end{figure}

\begin{figure}[t]
    \centering
    \begin{subfigure}{0.49\columnwidth}
        \centering
        \includegraphics[width=\linewidth]{Figures/agent_velocity_3_1.png}
        \caption{Agent velocities before and after doorway}
        \label{velocity_doorway}
    \end{subfigure}
    \hfill
    \begin{subfigure}{0.49\columnwidth}
        \centering
        \includegraphics[width=\linewidth]{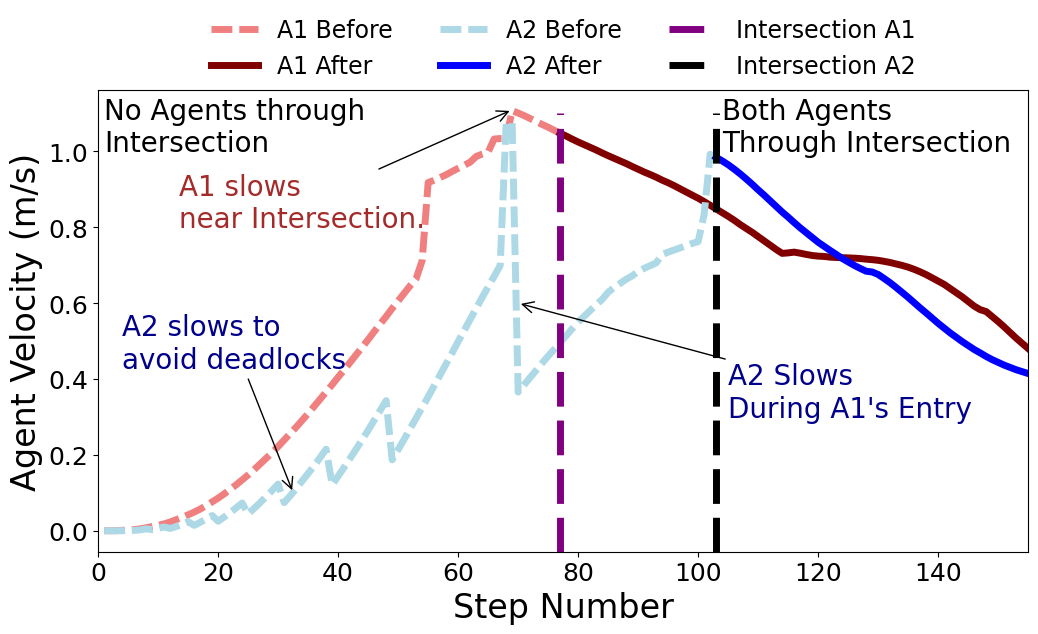}
        \caption{Agent velocities before and after intersection}
        \label{velocity_int}
    \end{subfigure}
    \caption{Velocity profiles of both robots using \framework{} framework, shown before and after navigating through a doorway and an intersection, highlighting the velocity adjustments made to ensure safe and deadlock-free passage.}
    \label{velocity_map}
    \vspace{-10pt}
\end{figure}

Table~\ref{table_combined} provides a quantitative summary of performance metrics. On average, \framework{} achieves zero collisions and a 100\% success rate across 50 runs, significantly outperforming all baselines and ablations. While \framework{} is not the most agile, an expected trade-off given its prioritization of safe and deadlock-free navigation, its reduction in speed is minimal when compared to other methods. Notably, \framework{} achieves the smoothest navigation through the doorway, demonstrating its ability to balance safety and liveness while maintaining smooth, natural robot movement.

\paragraph{Intersection SMG}
     \begin{figure}[t]
        \centering
        % Subfigure 1a
        \begin{subfigure}[b]{0.49\linewidth}
            \includegraphics[width=0.85\textwidth]{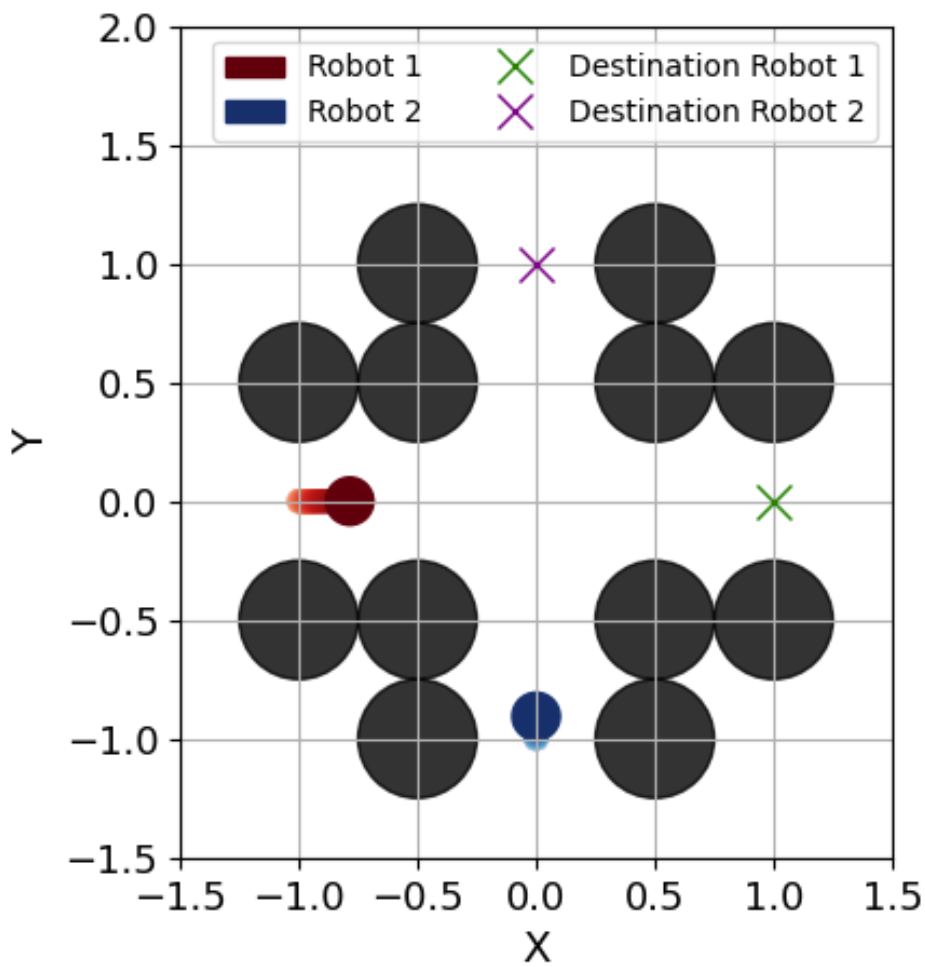}
            \caption{Robots moving to intersection.}
            \label{fig:1a}
        \end{subfigure}
        \hfill
        % Subfigure 1b
        \begin{subfigure}[b]{0.49\linewidth}
            \includegraphics[width=0.85\textwidth]{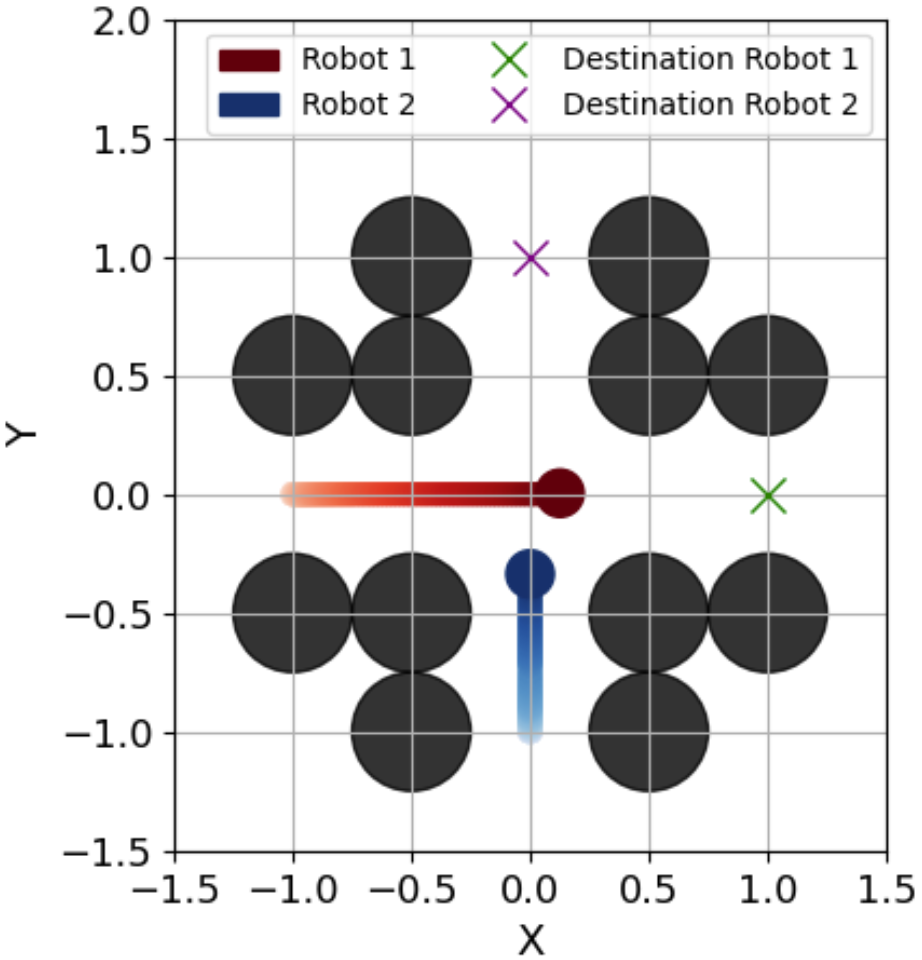}
            \caption{Robot 1 proceeds first.}
            \label{fig:1b}
        \end{subfigure}
        \hfill
        \caption{Intersection: Robot Multi-Agent Trajectories with Liveness}
        \label{liveness_intersection}
    \end{figure}

    \begin{figure}[t]
        \centering
        % Subfigure 1a
        \begin{subfigure}[b]{0.49\linewidth}
            \includegraphics[width=0.85\textwidth]{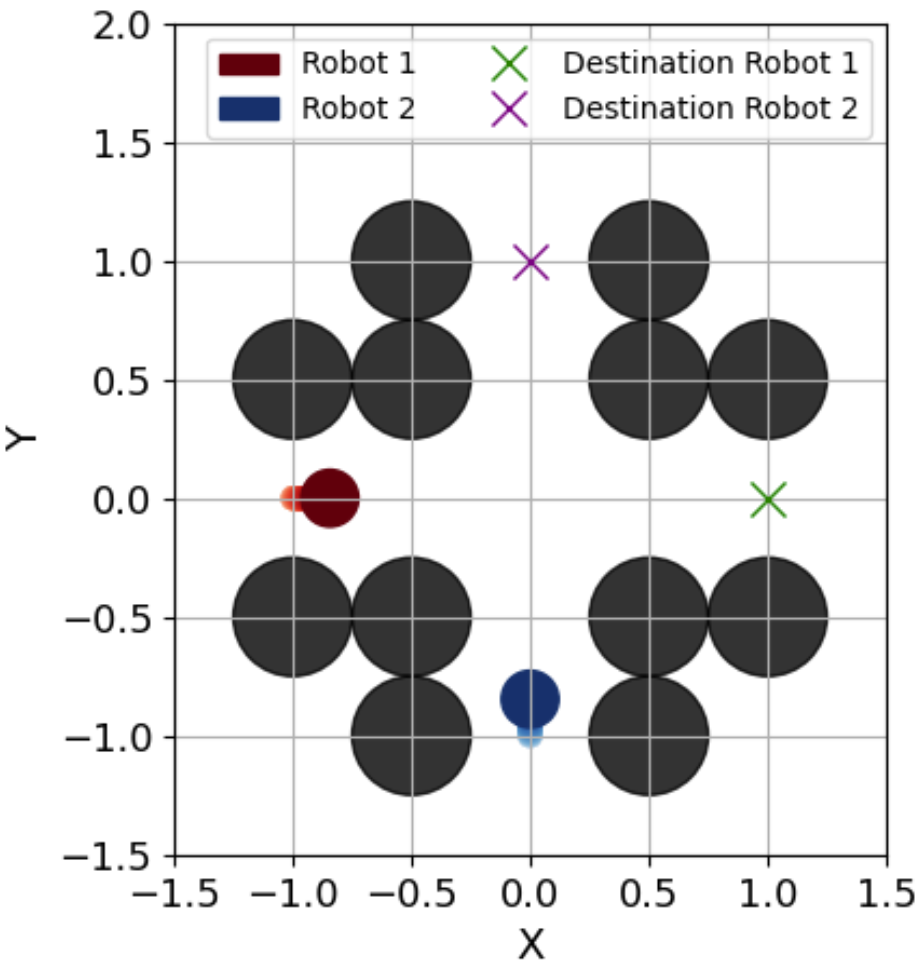}
            \caption{Robots moving to intersection.}
            \label{fig:1a}
        \end{subfigure}
        \hfill
        % Subfigure 1b
        \begin{subfigure}[b]{0.49\linewidth}
            \includegraphics[width=0.85\textwidth]{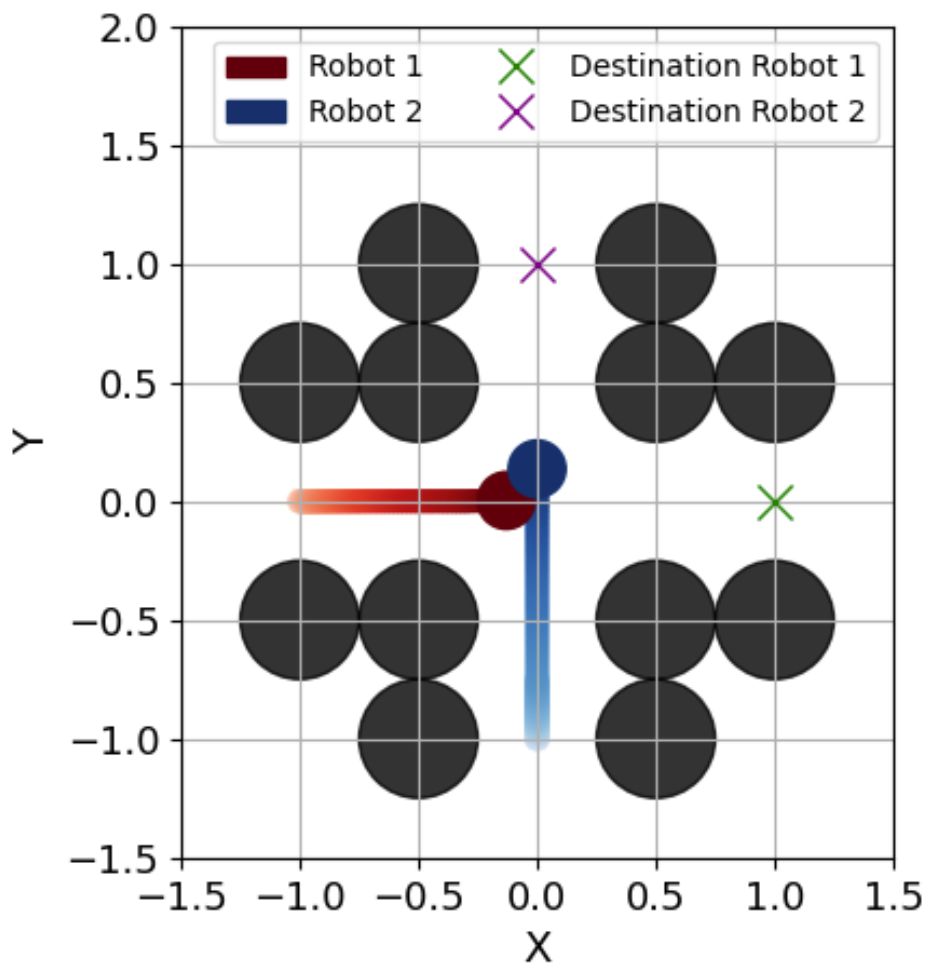}
            \caption{Robots collide in intersection.}
            \label{fig:1b}
        \end{subfigure}
        \hfill
        \caption{Intersection: Multi-Agent Trajectories, No Liveness}
        \label{noliveness_intersection}
    \end{figure}

   \begin{figure}[t]
        \centering
        % Subfigure 1a
        \begin{subfigure}[b]{0.49\linewidth}
            \includegraphics[width=0.85\linewidth]{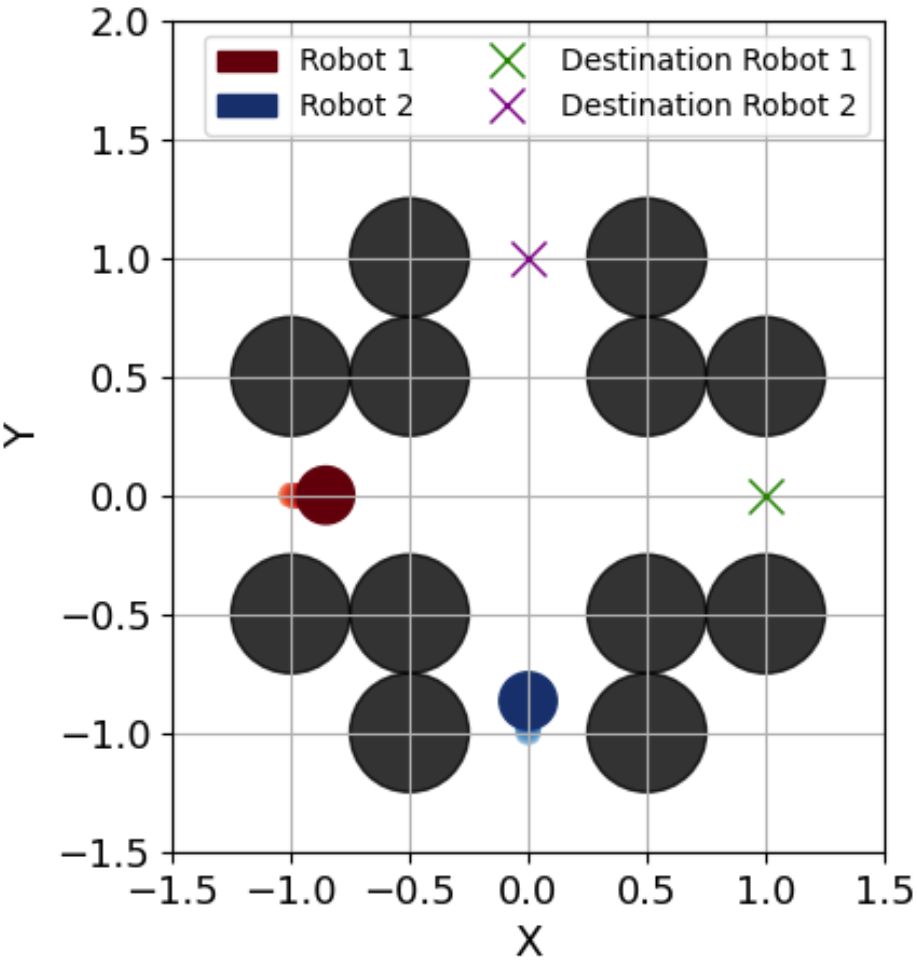}
            \caption{Robots moving towards the intersection.}
            \label{fig:1a}
        \end{subfigure}
        \begin{subfigure}[b]{0.49\linewidth}
            \includegraphics[width=0.85\linewidth]{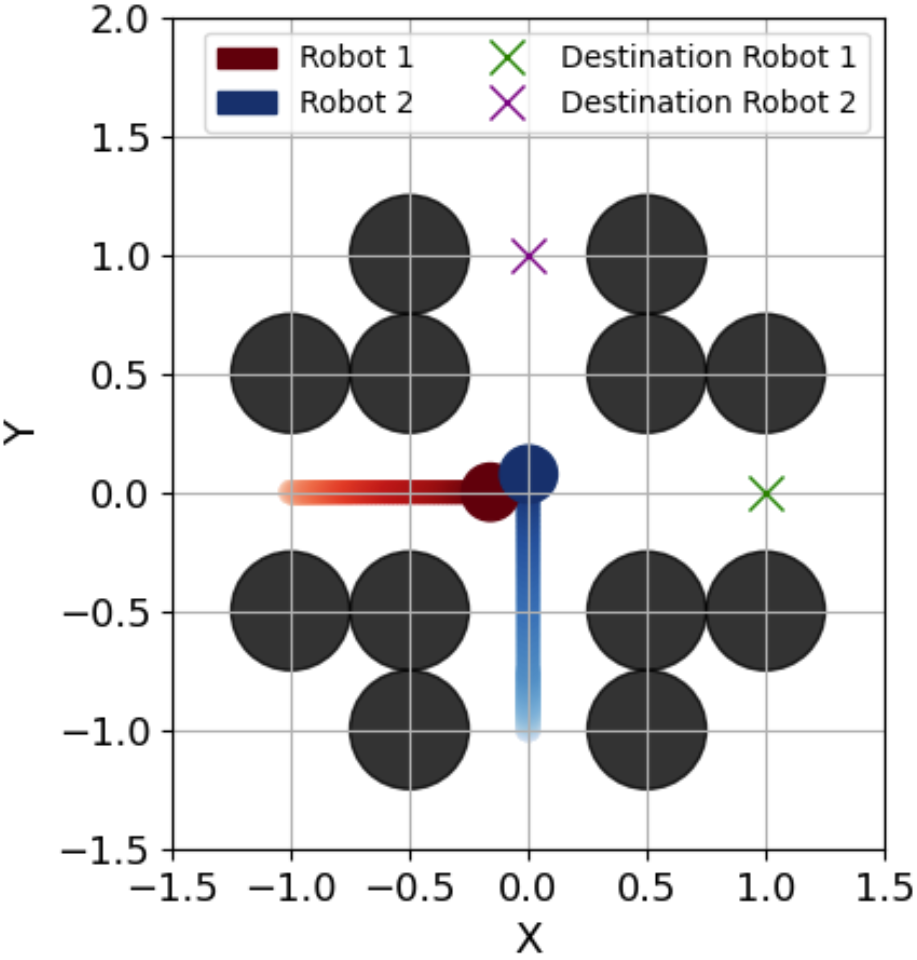}
            \caption{Robot 1 collides with dynamic obstacle.}
            \label{fig:1b}
        \end{subfigure}    
        \caption{Intersection: Single Agent Robot Trajectory, with Dynamic Obstacle Representing Robot 2}
        \label{hybrid_int}
    \end{figure}

We also evaluate \framework{} in an intersection scenario. As non-cooperative robots lack the centralized regulation of human traffic systems, \framework{} is essential for enabling safe, deadlock-free navigation. 

% robots must coordinate movement without centralized control. 

% In human-designed traffic systems, centralized mechanisms regulate flow to prevent deadlocks and ensure safe passage. Non-cooperative robots lack such predefined mechanisms, making \framework{} essential for enabling safe, deadlock-free navigation. 

% In human-designed traffic systems, stoplights or stop signs regulate flow to prevent deadlocks and ensure safe passage. Stoplights provide a centralized mechanism dictating when vehicles can proceed, while stop signs rely on coordination rules.

% Trajectory visualizations in the intersection scenario are included in the supplementary material\footnote{\href{https://livepoint-uva.github.io/}{\small Additional intersection results can be found at \textbf{livepoint-uva.github.io}}.}.

When \framework{} is applied (Figure \ref{liveness_intersection}), the robots successfully navigate the intersection without collisions. Upon detecting potential symmetry early in their approach, Agent 2 reduces its speed, allowing Agent 1 to pass through first, ensuring safe traversal.
In contrast, LoL leads to robots colliding in the intersection, as shown in Figure \ref{noliveness_intersection}, and a single-agent Depth-CBF also leads to collision, as shown in Figure \ref{hybrid_int}.

% We generate analogous figures to those in the Doorway Section. 

Figure \ref{int_distance_cbf} illustrates how \framework{} maintains safe separation, and Figure \ref{int_liveness_cbf} shows how the deadlock avoidance algorithm periodically proactively adjusts robot velocities when the liveness value drops below 0.3, preventing potential conflicts, particularly in the early steps where deadlocks are most likely to occur. Figure \ref{velocity_int} shows the velocity profiles of both robots before and after the intersection, showcasing our velocity perturbation's effectiveness. 

Table~\ref{table_combined} shows that identical to the Doorway scenario, \framework{} consistently achieves zero collisions and a 100\% success rate, outperforming baselines and ablations methods. \framework{} successfully prevents deadlocks in the intersection by detecting symmetry early and dynamically adjusting velocities, enabling robots to navigate through the intersection one after the other.

 % \begin{figure}[t]
 %      \centering
 %      %\framebox{\parbox{3in}{}}
 %      \includegraphics[scale=0.35]{Figures/int_distance_cbf.png}
 %      \caption{Intersection: Distance CBF, Static Obstacle}
 %      \label{int_distance_cbf}
 %   \end{figure}

% \begin{figure}[t]
%       \centering
%       %\framebox{\parbox{3in}{}}
%       \includegraphics[scale=0.35]{Figures/int_lcbf.png}
%       \caption{Intersection: Liveness CBF, with Threshold}
%       \label{int_liveness_cbf}
%    \end{figure}

   % \begin{figure}[t]
   %    \centering
   %    %\framebox{\parbox{3in}{}}
   %    \includegraphics[scale=0.35]{}
   %    \caption{Average delta path}
   %    \label{int_avg_delta_path}
   % \end{figure}

   % \begin{figure}[t]
   %    \centering
   %    %\framebox{\parbox{3in}{}}
   %    \includegraphics[scale=0.33]{}
   %    \caption{Agents velocities before and after doorway}
   %    \label{int_velocity_doorway}
   % \end{figure}

\subsection{Baseline Comparisons}

\subsubsection{ORCA}
ORCA formulates collision avoidance as a set of linear constraints based on the concept of velocity obstacles and solves for an optimal velocity using linear programming. However, ORCA tends to be overly conservative in cluttered and constrained environments. In highly symmetric scenarios such as our SMGs, agents must apply increasingly large corrective actions to avoid collisions. This excessive correction can actually restrict the entire constrained area, preventing the robots from reaching their destinations without violating their safety constraints and causing collisions. As shown in Figure~\ref{orca_traj} and~\ref{orcaint_traj}, replacing our liveness algorithm with ORCA in our framework results in collisions in both doorway and intersection scenarios.

\begin{figure}[t]
        \centering
        % Subfigure 1a
        \begin{subfigure}{0.325\columnwidth}
            \includegraphics[width=\textwidth]{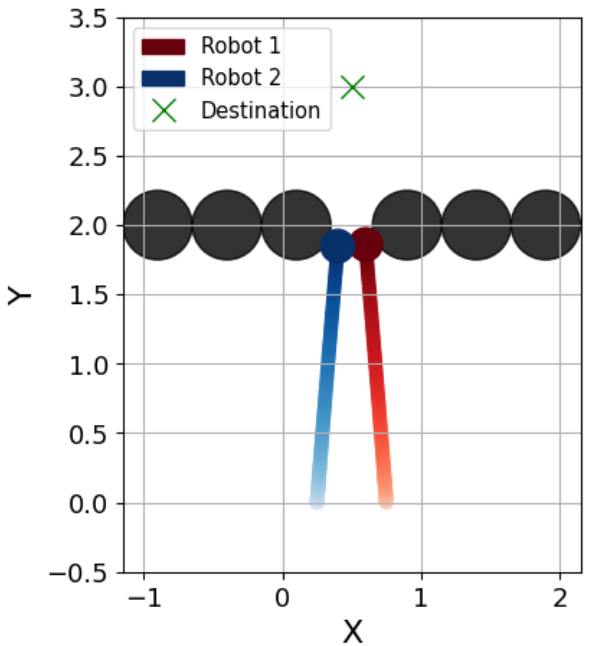}
            \caption{ORCA~\cite{orca2020}.}
            \label{orca_traj}
        \end{subfigure}
        \begin{subfigure}{0.325\columnwidth}
            \includegraphics[width=\textwidth]{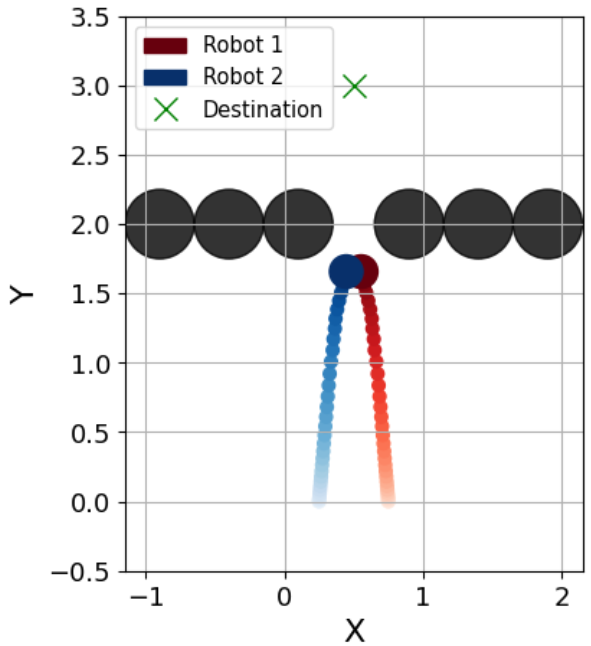}
            \caption{MPC-CBF~\cite{zeng2021mpc}.}
            \label{mpc-cbf_traj}
        \end{subfigure}
        \begin{subfigure}{0.325\columnwidth}
            \includegraphics[width=\textwidth]{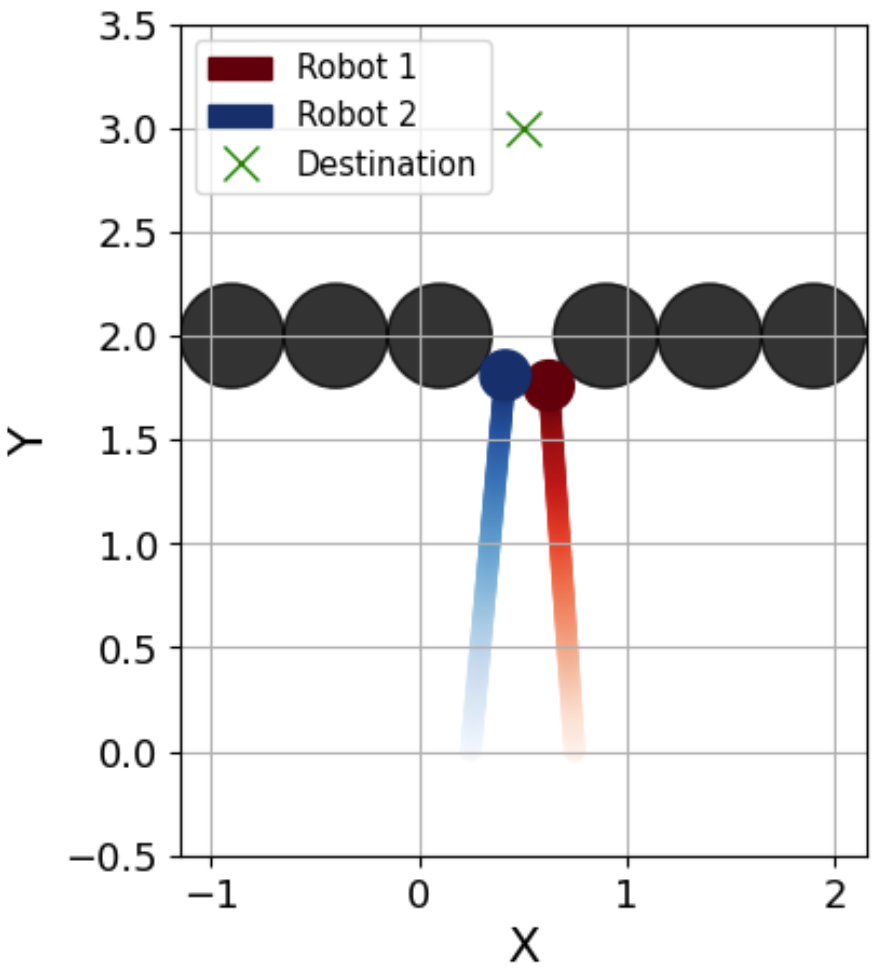}
            \caption{MPNet~\cite{mpnet2020}.}
            \label{mpnet_traj}
        \end{subfigure}
        \caption{Baselines: Multi-Agent Robot Trajectories for ORCA, MPC-CBF, and MPNet in doorway scenario. Collisions occur for all three.}
        \label{baseline_trajectories}
        \vspace{-10pt}
    \end{figure} 

\begin{figure}[t]
        \centering
        % Subfigure 1a
        \begin{subfigure}{0.325\columnwidth}
            \includegraphics[width=\textwidth]{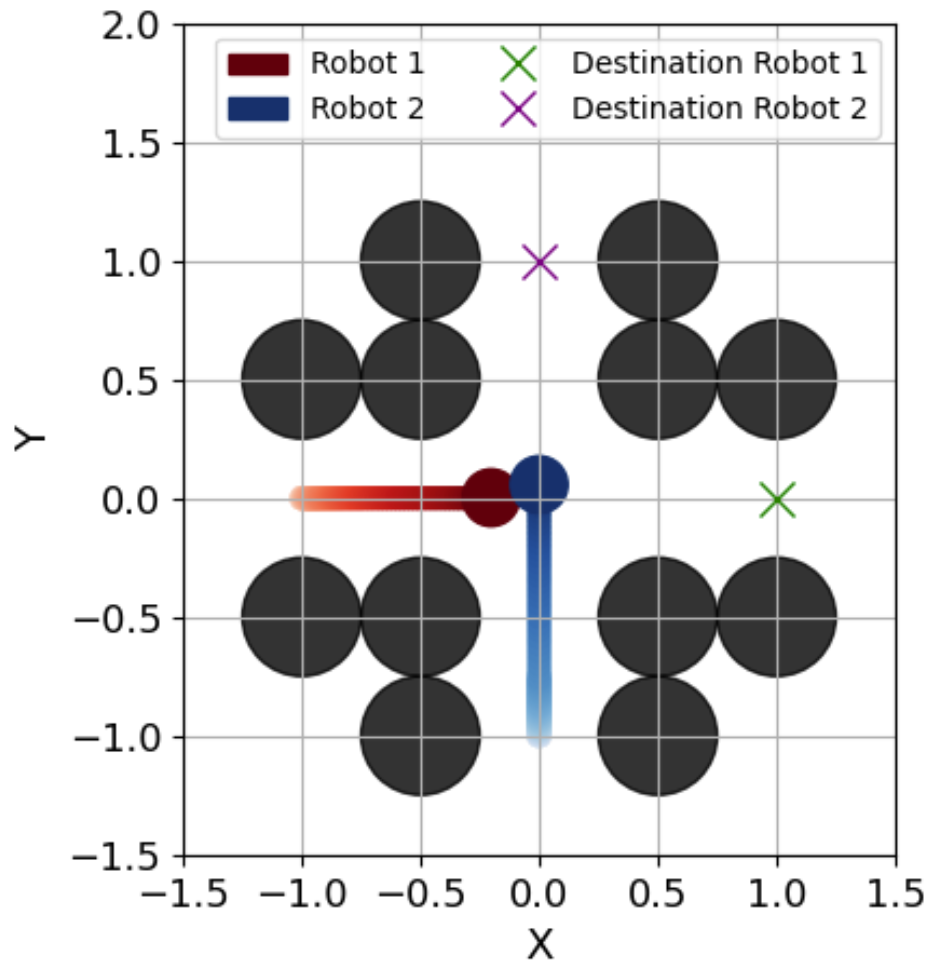}
            \caption{ORCA~\cite{orca2020}.}
            \label{orcaint_traj}
        \end{subfigure}
        \begin{subfigure}{0.325\columnwidth}
            \includegraphics[width=\textwidth]{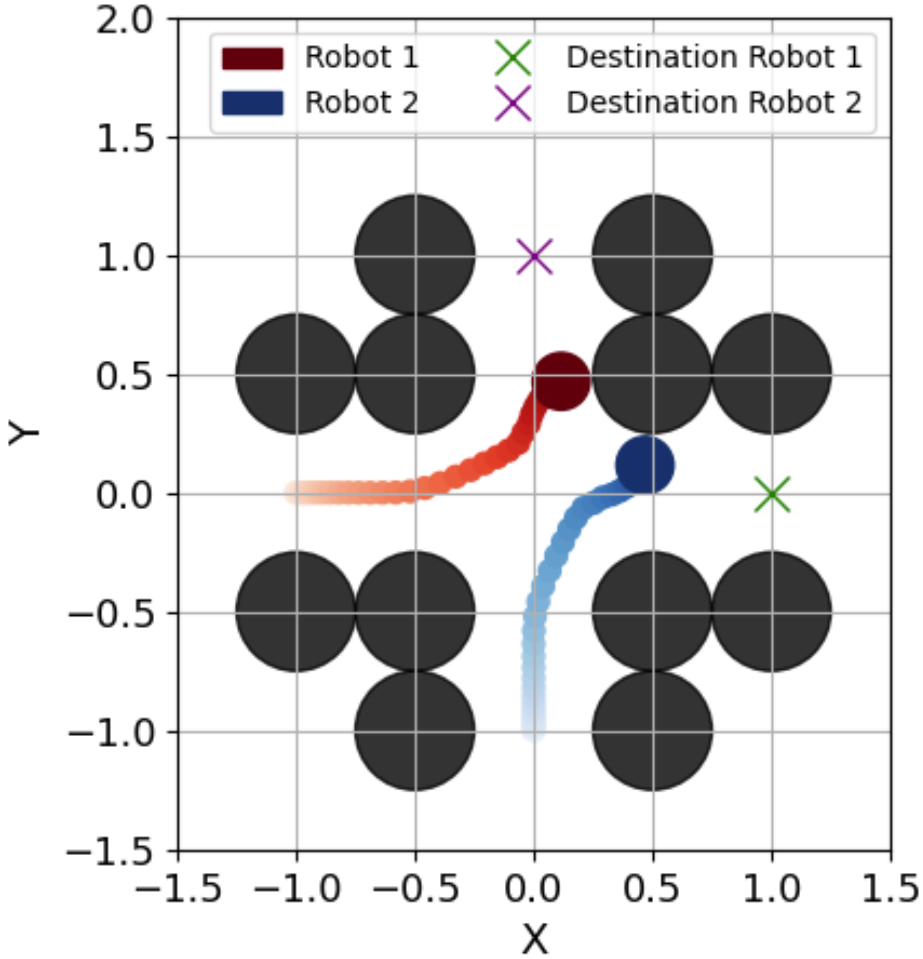}
            \caption{MPC-CBF~\cite{zeng2021mpc}.}
            \label{mpc-cbfint_traj}
        \end{subfigure}
        \begin{subfigure}{0.325\columnwidth}
            \includegraphics[width=\textwidth]{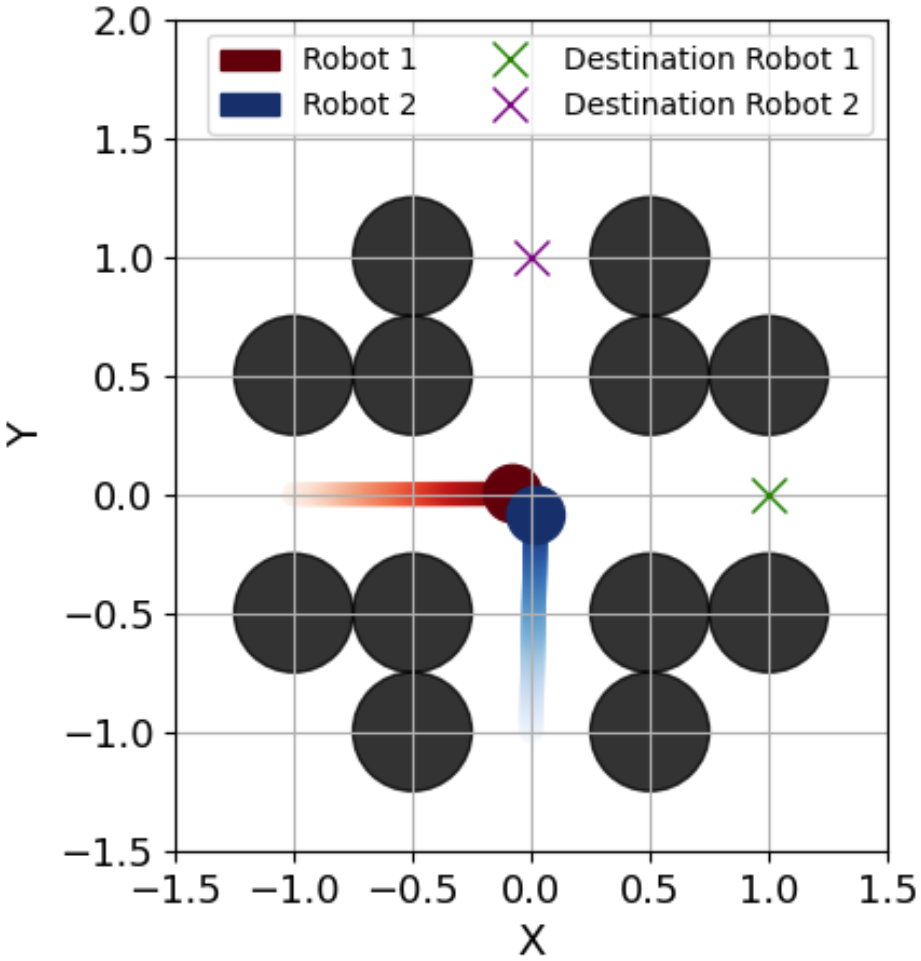}
            \caption{MPNet~\cite{mpnet2020}.}
            \label{mpnetint_traj}
        \end{subfigure}
        \caption{Baselines: Multi-Agent Robot Trajectories for ORCA, MPC-CBF, and MPNet in intersection scenario. Each lead to either collision or deadlock.}
        \label{baselineint_trajectories}
        \vspace{-10pt}
    \end{figure}

\subsubsection{MPC-CBF}
When applied to our SMGs, MPC-CBF is not successful, leading to robot collisions in the doorway scenario, as shown in Figure~\ref{mpc-cbf_traj}, and a deadlock in the intersection scenario, as shown in Figure~\ref{mpc-cbfint_traj}. This failure can be attributed to MPC-CBF enforcing safety constraints based on constant parameters, which do not allow for any adaptation on how strict constraints are in being followed. Thus, in perfectly symmetrical cases, as the SMGs we test on, SMG-CBF lacks any mechanism to resolve deadlocks. In contrast, \framework{} naturally breaks symmetry through dynamic velocity adjustments, allowing agents to avoid deadlocks in real time.

\subsubsection{MPNet}
While MPNet efficiently finds paths in single-agent scenarios, its performance degrades in multi-agent settings. This limitation likely stems from MPNet being trained on static environments, leading to potential overfitting to those conditions and reduced adaptability to dynamic obstacles. As a result, its motion planning can lead to unintended collisions, in constrained SMG environments like doorways and intersections, as shown in Figures \ref{mpnet_traj} and \ref{mpnetint_traj}.

% \subsection{Discussion}
The results highlight the limitations of single-agent Depth-CBF, LoL, and other state-of the-art baselines in constrained multi-agent environments, while \framework{} overcomes these challenges with its novel Universal Safety-Liveness Certificate. By ensuring smooth, safe, and deadlock-free navigation,  \framework{} demonstrates scalability and robustness for real-world multi-agent coordination in dynamic, constrained environments.

\section{CONCLUSION}
\label{Conclusion}

In this work, we introduced a multi-agent navigation framework that synthesizes Control Barrier Functions (CBFs) over point cloud data for safe and deadlock-free navigation in cluttered environments. The proposed method incorporates minimal velocity perturbations to resolve potential deadlocks. The results show that our approach successfully mitigates collisions and deadlocks while maintaining smooth trajectories for multiple agents in complex environments.

Future work could explore extending this framework to scenarios with more agents. While our framework effectively handles two-agent scenarios, its performance in environments with a significantly larger number of agents remains unexplored. Additionally, our current approach may face some challenges in environments where available point cloud information is sparse or incomplete. An extension could be to explore image-based perception as an alternative input modality. Investigating the integration of complimentary or backup modalities could further enhance adaptability, enabling more reliable navigation in real-world applications. 

  % This command serves to balance the column lengths
                                  % on the last page of the document manually. It shortens
                                  % the textheight of the last page by a suitable amount.
                                  % This command does not take effect until the next page
                                  % so it should come on the page before the last. Make
                                  % sure that you do not shorten the textheight too much.

%%%%%%%%%%%%%%%%%%%%%%%%%%%%%%%%%%%%%%%%%%%%%%%%%%%%%%%%%%%%%%%%%%%%%%%%%%%%%%%%

%%%%%%%%%%%%%%%%%%%%%%%%%%%%%%%%%%%%%%%%%%%%%%%%%%%%%%%%%%%%%%%%%%%%%%%%%%%%%%%%

%%%%%%%%%%%%%%%%%%%%%%%%%%%%%%%%%%%%%%%%%%%%%%%%%%%%%%%%%%%%%%%%%%%%%%%%%%%%%%%%

% \section*{ACKNOWLEDGMENT}

% This work was carried out at the Chandra Robot $\triangle$utonomy Lab (CR$\triangle$L) at the University of Virginia.

\bibliographystyle{IEEEtran} % Or another style (e.g., plain, alpha)

\end{document}